\documentclass{article}

\usepackage{arxiv}

\usepackage[utf8]{inputenc} % allow utf-8 input
\usepackage[T1]{fontenc}    % use 8-bit T1 fonts
\usepackage{hyperref}       % hyperlinks
\usepackage{url}            % simple URL typesetting
\usepackage{booktabs}       % professional-quality tables
\usepackage{amsfonts}       % blackboard math symbols
\usepackage{nicefrac}       % compact symbols for 1/2, etc.
\usepackage{microtype}      % microtypography

\usepackage{graphicx}
\usepackage{natbib}
\usepackage{doi}

\usepackage{algorithm}
\usepackage{amsmath}
\usepackage{amsthm} 
\usepackage{amssymb}  
\usepackage{mathtools}
\usepackage{thm-restate}
\usepackage{nicefrac}
\usepackage[noend]{algpseudocode}\algrenewcommand\algorithmicrequire{\textbf{Input:}}
\usepackage{tikz}
\usetikzlibrary{calc,fit,positioning,arrows}

\usepackage{cleveref}       % smart cross-referencing

\newcommand{\abs}[1]{\left\vert #1\right\vert}
\DeclarePairedDelimiter{\norm}{\lVert}{\rVert}
\newcommand{\R}{\mathbb{R}}
\newcommand{\N}{\mathbb{N}}
\newcommand{\scalarprod}[2]{\langle #1, #2\rangle}

\DeclareMathOperator*{\argmin}{arg\,min}
\DeclareMathOperator*{\argmax}{arg\,max}

\newcommand{\F}{\mathcal{F}}
\newcommand{\X}{\mathcal{X}}
\newcommand{\Sc}{\mathcal{S}}
\newcommand{\A}{\mathcal{A}}

\newcommand{\E}{\mathbb{E}}

\DeclarePairedDelimiterX{\infdivx}[2]{(}{)}{%
  #1\;\delimsize\|\;#2%
}

\newtheorem{claim}{Claim}
\newtheorem{theorem}[claim]{Theorem}
\newtheorem{lemma}[claim]{Lemma}
\newtheorem{proposition}[claim]{Proposition}
\newtheorem{corollary}[claim]{Corollary}
\newtheorem{definition}[claim]{Definition}
\newtheorem{assumption}[claim]{Assumption}
\newtheorem*{notation*}{Notation}

\title{Efficiency Separation between RL Methods:\\Model-Free, Model-Based and Goal-Conditioned}

\author{Brieuc Pinon, Raphaël Jungers, Jean-Charles Delvenne\\
ICTEAM/INMA\\
UCLouvain\\
Belgium \\
\texttt{\{brieuc.pinon,raphael.jungers,jean-charles.delvenne\}@uclouvain.be}
}

% Uncomment to override  the `A preprint' in the header
%\renewcommand{\headeright}{Technical Report}
%\renewcommand{\undertitle}{Technical Report}

%%% Add PDF metadata to help others organize their library
%%% Once the PDF is generated, you can check the metadata with
%%% $ pdfinfo template.pdf
\hypersetup{
pdftitle={Efficiency Separation between RL Methods: Model-Free, Model-Based and Goal-Conditioned},
pdfauthor={Brieuc Pinon, Raphaël Jungers, Jean-Charles Delvenne},
}

\begin{document}

\maketitle

\begin{abstract}
    We prove a fundamental limitation on the efficiency of a wide class of Reinforcement Learning (RL) algorithms. This limitation applies to model-free RL methods as well as a broad range of model-based methods, such as planning with tree search.
    
    Under an abstract definition of this class, we provide a family of RL problems for which these methods suffer a lower bound exponential in the horizon for their interactions with the environment to find an optimal behavior. However, there exists a method, not tailored to this specific family of problems, which can efficiently solve the problems in the family.

    In contrast, our limitation does not apply to several types of methods proposed in the literature, for instance, goal-conditioned methods or other algorithms that construct an inverse dynamics model.
\end{abstract}

\section{Introduction}
    
    A significant part of research in Artificial Intelligence is dedicated to creating, analyzing, and evaluating Reinforcement Learning (RL) methods. One goal is to understand when and why some types of methods will work better than others from a statistical and computational point of view. Explaining such differences is of central importance to drive the design of new efficient algorithms.

    A first step in understanding the differences between methods is to abstract them into classes. Two of the main classes of RL methods are model-based and model-free methods. Model-based methods are algorithms that leverage a known or learned model of the environment dynamics \cite{mordatch2020tutomodel}. In contrast, model-free methods do not use such a model.
    
    While the distinction between model-free, for example Q-learning and policy gradient algorithms, and model-based classes is commonly accepted \cite{sutton2018reinforcement}, there exist no agreed-upon general formal definitions of these classes. Authors resort to proposing their own definitions \cite{sun2019model}, or to proving their results on classical algorithms that are representative of these classes \cite{tu2019gap}.

    Several works have studied the relative performance of these classes from a theoretical point of view, and it is cited as an open problem in the survey \citet{levine2020offline}. For tabular Markov Decision Process, \citet{tu2019gap} makes a survey of the existing literature that studied the model-free or model-based methods. They obtain no clear conclusion in favor of one class over another. In the specific problem family of the Linear Quadratic Regulator, \citet{tu2019gap} proves a polynomial separation result.

    To our knowledge, \citet{sun2019model} gives the only result with a gap on the efficiency that is exponential in a relevant quantity to the advantage of model-based over model-free methods. We extend their result in two ways.

    First, we redefine and broaden the class of methods to which a limitation applies. We construct a family of problems that is hard in the horizon, not only for model-free methods, but also for several model-based ones.

    For this family, we also show that there exists an RL method that can efficiently discover the optimal behavior. The second and main contribution with respect to the result of \citet{sun2019model} is that our RL method is not specific to the family of problems, but applies to a much more universal set of problems. In opposition, their algorithm relies crucially on the knowledge of the set of problems present in the family. In general, such knowledge cannot be assumed to be known in practice.

    Our findings are thus the first to implicate a strong efficiency limitation of a wide class of RL methods, while a universal method does not have that limitation. Moreover, the exposed limitation points out that some ideas proposed in the literature could be essential to solve a large set of problems.

    This article is structured as follows. In Section 2, we define the notations and formalize the problems we address. In Section 3, we characterize the class of RL methods on which we will prove a limitation. In Section 4, we state our main Theorem. In Section 5, we demonstrate numerically the Theorem with deep RL algorithms. Finally, we give a summary of our findings and discuss ideas in the literature that could overcome the limitation presented in this paper.

\section{Preliminaries}
        \paragraph{Notations}
        We use $\Delta(\Omega)$ to denote the set of probability measures over a sample space $\Omega$ with an implicitly associated $\sigma$-algebra. We define the function $\delta(.)$ to output $1$ if the condition in its argument is respected, else $0$. For a set $A$, we note $A^*$ the set of all finite sequences of elements in $A$, $\cup_{i\in\N}A^i$. Reference to neural networks initialization refers to the initialization of a multilayer perceptron (MLP), a classical neural network architecture which compose iteratively linear operators and non-linear point-wise activation functions.  We keep implicit the input dimension, output dimension, and number of layers with their respective numbers of hidden units. 
    
        In this paper an RL problem is a finite horizon Markov Decision Process (MDP), which is defined by a horizon $H\in \mathbb{N}$, a state space $\Sc$, an action space $\A$ and an operator $P:(\Sc\times\A)\cup\{\bot\}\rightarrow \Delta(\R\times\Sc)$, which determines the initial state distribution with $P(s_0|\,\bot)$ and the transition dynamics with $P(r,s'|\,s,a)$, where $r$ is the reward and $s'$ is the next state.

        Throughout the paper, the set of actions is binary $\A=\{0,1\}$ and, for some $n\in\N$, the set of states $\Sc$ is contained in $\{(t,x)\in\{0,\ldots, H\}\times\R^n\}$, where $t$ is the time step. Initial states have $t=0$, and $t$ is incremented at each transition by $P$. When the time step $H$ is reached, we say that the state is final and the trajectory ends.

        We note $(s_0,a_0,r_0,s_1,a_1,\ldots,s_H)\sim P^{\pi}$ a trajectory sampled according to the operator $P$ and a policy $\pi:\Sc\rightarrow\Delta(\A)$. We use $\pi^U$ to denote the policy which outputs a uniform distribution over actions.

        An RL method is an algorithm that outputs an optimized policy, $\pi$, to maximize the expected cumulative rewards $\E_{(s_0,a_0,r_0,\ldots,s_H)\sim P^{\pi}}[\sum_{t=0}^{H-1}r_t]$.  To do so, the method can draw and leverage samples of trajectories from the MDP.

\section{A formalization of a large RL class}
        In this section, we provide a definition for a class of RL methods before stating, in the next section, a limitation on their efficiency.
        
        We constrain this class of methods in one main way. For any observed transitions $(s,a,r,s')$, the state $s'$ can only be observed through evaluations of a set of functions. Moreover, this set of functions must respect a symmetry condition.

        As we will illustrate, these constraints are satisfied by a large set of classical RL methods.

        To provide a formal characterization of this class of methods, we will need several definitions.

        In Algorithm \ref{alg:pointer}, we define an encoder and a decoder for states to pointers and pointers to states, respectively. These methods provide a way to obfuscate a state, while still allowing to manipulate that state.
         \begin{algorithm}[t]
            \caption{Encoder and decoder of pointers and states.}
            \label{alg:pointer}
            \begin{algorithmic}
                \State last\_idx $\leftarrow$ -1
                \State list $\leftarrow$[]
                \Function{encode\_state}{$s$}
                    \State last\_idx$\leftarrow$last\_idx+1
                    \State list $\leftarrow$list.push(s)
                    \State \Return last\_idx
                \EndFunction
                \Function{decode\_pointer}{idx}
                    \State \Return list[idx]
                \EndFunction
            \end{algorithmic}
        \end{algorithm}
        We will note $\Bar{s}$ for an index corresponding to state $s$.
        
        In Algorithm \ref{alg:interface}, we define an interface between the RL methods and the RL problem. It leverages the encoder and decoder just defined to obfuscate the states in which we enter. Simultaneously, it maintains and constructs a dataset upon which functions can be defined to evaluate encountered states. This setup allows us to constrain the available information on the states by constraining the set of functions that can be used to evaluate them.
        \begin{algorithm}[t]
            \caption{$\mathrm{Env}:$ an interface for an MDP defined by the transition operator $P$. In input an initialized encoder/decoder of pointers and states as defined in Algorithm \ref{alg:pointer}.}
            \label{alg:interface}
            \begin{algorithmic}
                \State $D\leftarrow []$
                \Function{$\mathrm{Env}\mathrm{\_init}$}{$\F$}
                    \State $s\sim P(\bot)$
                    \State $\Bar{s}\leftarrow\mathrm{encode\_state}(s)$
                    \State \Return $s,\Bar{s},\,\begin{bmatrix}f(s,D)\end{bmatrix}_{f\in\F}$
                \EndFunction
                \Function{$\mathrm{Env}\mathrm{\_step}$}{$\Bar{s}$, $a$, $g$, $\F$}
                    \State $s\leftarrow\mathrm{decode\_pointer}(\Bar{s})$
                    \If{$s$ is not terminal}
                        \State $r,s'\sim P(r,s'|\,s,a)$
                        \State $\Bar{s'}\leftarrow\mathrm{encode\_state}(s')$
                        \State $f_{s'}\leftarrow \begin{bmatrix} f(s',D) \end{bmatrix}_{f\in\F}$
                        \State $D\leftarrow D$.append($(s,g(a,r,f_{s'}))$)
                        \State \Return $s,\,a,\,r,\,\Bar{s'},\, f_{s'}$
                    \ElsIf{$s$ is terminal}
                        \State \Return $\bot$
                    \EndIf
                \EndFunction
                \Function{$\mathrm{Env}\mathrm{\_evaluate\_state}$}{$s$, $\F$}
                    \State \Return $\begin{bmatrix} f(s,D) \end{bmatrix}_{f\in\F}$
                \EndFunction
                \Function{$\mathrm{Env}\mathrm{\_encode}$}{$s$}
                    \State \Return $\mathrm{encode\_state}(s)$
                \EndFunction
                \Function{$\mathrm{Env}\mathrm{\_reset\_data}$}{$ $}
                    \State $D\leftarrow []$
                \EndFunction
            \end{algorithmic}
        \end{algorithm}

        We impose these functions to respect a special symmetry condition upon permutations of the input variables. We define such permutations here.
        \begin{definition}\label{def:perm}
            A \emph{permutation of coordinates} $p:\R^n\rightarrow\R^n$ is a function such that for $x\in\R^n$, $p(x)_{f(i)}=x_i$ for some bijective function $f$ from $\{1\ldots n\}$ to itself.
            
            We will also use interchangeably $p':\N\times\R^n\rightarrow\N\times\R^n$ defined as $p'(s=(t,x))=(t,p(x))$.
        \end{definition}

        From these definitions, we are able to express the assumption that the RL methods part of our class must satisfy.
        \begin{assumption}\label{assum:RL-class-with-interface}
            The RL method takes as unique input the interface defined in Algorithm \ref{alg:interface}. The function $g$ and the sequence of functions $\F$ given in argument to these methods must have the following form $g:\A\times\R\times\R^N\rightarrow X$, and $\F=\{f:\Sc\times(\Sc\times X)^*\rightarrow \R\}^N$ where $X$ is some undefined set and $N$ some natural number. Moreover, for any function $f\in\F$ and any permutation of the coordinates $p$ we must have $f(s,((s_0,x_0),\ldots))=f(p(s),((p(s_0),x_0),\ldots))$.
        \end{assumption}

        To analyze an existing RL method, we must translate this method such that it fits the Assumption, if possible. For example, drawing transitions of the RL problem must be replaced by calls to the interface.
        
        For all the translations to this form that we illustrate, any lower bound on the number of calls of the resulting algorithm to the interface can be translated back to the original RL method as the same lower bound on the number of necessary operations. In some cases, depending on how the original algorithm has been reduced to this form, it can also be translated back as a lower bound on the number of necessary samples of the environment. Giving a statistical bound as well as a computational bound.

        We note that the interface $\mathrm{Env}$ allows not only to sample full trajectories, but also to generate transitions at any encountered state. Thus, it is possible to use common local planning methods that rely on such a generator, for example, local tree search procedures.
        
        The set $\F$ is used as a set of learning algorithms in our translation of existing RL methods to use this interface. The restriction on $\F$ can be intuitively understood as a restriction on the learning algorithms.
        
        Specifically, we work with the assumption that the learning procedure treats the coordinates of the input state vectors symmetrically. This assumption is natural since, without a specific prior about the task at hand, we do not wish to process the different coordinates in particular ways. However, we note that in a practical setting with neural networks, the weights are initialized randomly, which can create asymmetries. Nevertheless, we prove in Appendix \ref{sec:sym-nn} that the distribution of trained neural networks has the demanded symmetries under a natural assumption on their architecture.

        \subsection{Translation of a classical RL method}
            As an example of our formalization, we show how a classical RL method, fitted Q-iteration \cite{riedmiller2005neural}, can be translated to use the interface as defined in Assumption \ref{assum:RL-class-with-interface}. We give two more complete examples in Appendix \ref{sec:translation-alg}: a model-free policy gradient algorithm and a local tree search model-based method. We also explain in the Appendix how other model-based RL methods can be cast to use the Assumption. These examples depict how a much larger set of algorithms could be adapted to this form.

            To clarify the presentation, we use methods as simplified as possible. For example, we only work with one-step RL methods, which use $(s,a,r,s')$ instead of longer sequences. The presented formalism can however be extended to include these multi-step methods. Our results are robust to such changes.

            In Algorithm \ref{alg:fitted-Q-iter-original}, we define the fitted Q-iteration algorithm with deep learning, a common variant of model-free deep Q-learning methods. The Algorithm trains at each iteration a new Q-function on a new dataset following the Bellman equation and the previously trained and fixed $Q$ function. For simplicity, we do not add an exploration term to the policy.
            \begin{algorithm}[t]
                \caption{Fitted Q-iteration (with minimal exploration)}
                \label{alg:fitted-Q-iter-original}
                \textbf{Parameters:} $H$: horizon of the MDP, $\Sc$: state space, $\A$: action space, $K$: number of iterations, $\eta>0$: scalar factor for gradient descent, $I$: number of samples by iteration (divisible by $H$)
                \begin{algorithmic}
                    \Require $P$: the operator corresponding to the MDP to solve
                    \State Initialize a neural network $Q_\theta:\Sc\rightarrow\R^{\abs{A}}$
                    \State $\pi(a|\,s,\,Q)\leftarrow\delta(a=\argmax_{a\in A}Q(s,a))$
                    \For{$k\leftarrow 1,\ldots, K$}
                        \State $\Bar{Q}\leftarrow Q_{\theta}$
                        \State Initialize a neural network $Q_\theta$
                        \For{$i\in 1,\ldots, I/H$}
                            \State $s\leftarrow P(s|\,\bot)$
                            \For{$t\leftarrow 0\ldots H-1$}
                                \State $a\sim \pi(a|\,s,\,\Bar{Q})$ 
                                \State $r,s' \leftarrow P(r,\,s'|\,s,\,a)$
                                \State $\theta\leftarrow \theta - \eta \nabla_\theta \left[Q_\theta(s,a)-r-\max_{a\in\A}\Bar{Q}(s',a)\right]^2$
                                \State $s\leftarrow s'$
                            \EndFor
                        \EndFor
                    \EndFor
                    \State $\pi(a|\,s)\leftarrow\pi(a|\,s,\,Q_\theta)$
                    \State \Return $\pi(a|\,s)$
                \end{algorithmic}
            \end{algorithm}

            In Algorithm \ref{alg:fitted-Q-iter-translated}, we give a translation of the original algorithm using the interface defined in Algorithm \ref{alg:interface}. We specify the function $g$ to construct a dataset and the set of functions $\F$ to learn and apply the Q-functions.
            
            Following the argument formalized in Appendix \ref{sec:sym-nn} about the symmetries existing in neural networks training by gradient descent, the functions defined in $\F$ practically satisfy the symmetry condition of Assumption \ref{assum:RL-class-with-interface}.
            
            \begin{algorithm}[t]
                \caption{Fitted Q-iteration implemented with the interface defined in Algorithm \ref{alg:interface}. The function $\mathrm{chunk}(D,\,I)$ partitions the data sequence $D$ into contiguous subsequences of length $I$ until there is less than $I$ elements left which are put in the last subsequence.}
                \label{alg:fitted-Q-iter-translated}
                \textbf{Parameters:} $H$: horizon of the MDP, $\Sc$: state space, $\A$: action space, $K$: number of iterations, $\eta>0$: scalar factor for gradient descent, $I$: number of samples by iteration
                \begin{algorithmic}
                    \Require $\mathrm{Env}$: the MDP interface with its methods defined in Algorithm \ref{alg:interface}
                    \State $\pi(a|\,Q_s)\leftarrow\delta(a=\argmax_{a\in A}Q_s(a))$
                    \Function{$\mathrm{learnQ}_a$}{$s,\,D$}
                        \State Initialize a neural network $Q_\theta:\Sc\rightarrow\R^{\abs{\A}}$
                        \State $D_1,\ldots,D_{n-1},D_n\leftarrow\mathrm{chunk}(D,\,I)$
                        \For{$(s_d,(a_d,r_d,Q_{s_d'}))\leftarrow D_{n-1}$}
                            \State $\theta\leftarrow \theta - \eta \nabla_\theta \left[Q_\theta(s_d,a_d)-r_d-\max_{a\in\A}Q_{s_d'}(a)\right]^2$
                        \EndFor
                        \State \Return $Q_\theta(s,a)$
                    \EndFunction
                    \State $\F\leftarrow [\mathrm{learnQ}_a]_{a\in \A}$
                    \State $g\leftarrow g(a,r,f_{s'})=(a,r,f_{s'})$
                    \For{$k\leftarrow 1,\ldots, K\cdot I/H$}    
                        \State $s,\Bar{s},Q_s\leftarrow\mathrm{Env}\mathrm{\_init}(\bot)$
                        \For{$t\leftarrow 0,\ldots, H-1$}
                            \State $a\sim \pi(a|\,Q_s)$ 
                            \State $s,a,r,\Bar{s},Q_s \leftarrow\mathrm{Env}(\Bar{s},\,a,\,g,\,\F)$
                        \EndFor
                    \EndFor
                    \State $\pi(a|\,s)\leftarrow\delta(a=\argmax_{a\in A}\mathrm{Env\_eval\_state}(s,[\mathrm{learnQ}_a]_{a\in \A})$
                    \State \Return $\pi(a|\,s)$
                \end{algorithmic}
            \end{algorithm}

\section{Main theorem}
    The main result of this paper states an efficiency gap between a broad group of RL methods and a specific algorithm. We present this algorithm.
    
    Algorithm \ref{def:alg_working} is a simple goal-conditioning method that first samples a dataset of trajectories by drawing actions uniformly. From this dataset, it extracts a state which gives maximal reward when entering it and poses this state as its goal. From the same dataset, a function is learned to predict the action taken from any state given the final state encountered. Finally, it constructs a policy that tries to reach the goal with the action predictor conditioned by the goal.

    \begin{algorithm}[t]
        \caption{A goal-conditioned algorithm.}\label{def:alg_working}
        \textbf{Parameters:} $N$: number of samples, $\alpha$: parameter of the learning algorithm (number of active features)
        \begin{algorithmic}[1]
            \Require $P$: the operator corresponding to the MDP to solve
            \State $D\leftarrow\{\}$
            \For{$i\leftarrow 1,\ldots N$}
                \State $D\leftarrow D\cup\{(s_0,a_0,r_0,s_1,a_1,\ldots,s_H)\sim P^{\pi^U}\}$
            \EndFor
            \State $g\leftarrow \argmax_{s_H} r\quad \text{s.t.}\quad (\ldots,r,s_H)\in D$
            \For{$t\in\{0,\ldots,H-1\}$}
                \State $D^t_{GC}\leftarrow \{((s_t,s_H),a_t)|\,\forall (\ldots, s_t, a_t,\ldots,s_H)\in D\}$
                \State $f^t\leftarrow$ the result of the optimization program \ref{eq:empirical-opt} with dataset $D^t_{GC}$ and parameter $\alpha$.
            \EndFor
            \State \Return $\pi(a|s_t)=f^t(a|s_t,g)$
        \end{algorithmic}
    \end{algorithm}

    The learning algorithm for the action prediction simply minimizes the empirical rate of errors on the dataset. The space of functions in which we learn is a composition of a feature selection, a linear function, and a threshold (to output a binary prediction). The formal mathematical program is
    \begin{equation}\label{eq:empirical-opt}
    \begin{aligned}
        \argmin_{f\in F} \quad & \sum_{((s_t,s_H),a)\in D^t_{GC}} \delta(f(\begin{bmatrix}s_t\\s_H\end{bmatrix})\neq a) \\
        \textrm{s.t.} \quad & \norm{w}_0\leq \alpha,
    \end{aligned}
    \end{equation}
    where $\begin{bmatrix}s_t\\s_H\end{bmatrix}$ denotes the real part of the states concatenated, dataset $D^t_{GC}$ is defined in Algorithm \ref{def:alg_working}, $F$ is the set of linear functions with a threshold, $F=\{x\in \R^{2n}\rightarrow\delta(\scalarprod{w}{x}>0)|\,\forall w\in \R^{2n}\}$, and $w\in\R^{2n}$ is the parameter associated to $f_w$. The condition $\norm{w}_0\leq \alpha$ bounds the number of non-zero weights in the linear function by $\alpha\in\N$.

    We argue that this algorithm is designed for a much more universal set of problems than just for the family of problems used in the proof. This procedure could apply to any goal-reaching problem. The initial policy to explore the dynamics is simply uniform. The space of functions in which we learn our goal-conditioned policy is universal, and only made sufficiently simple to make the proof more straightforward. This claim is supported by the numerical analysis with neural networks presented in the next section.

    In contrast, the family of methods for which we prove an inefficiency is only assumed to respect Assumption \ref{assum:RL-class-with-interface}. This allows methods that are specifically designed to be performant on the family of the proof.

    We have now defined everything needed to state our main result.

    \begin{restatable}{theorem}{mainthm}\label{thm:main_thm}
        There exists a family of finite horizon MDPs such that
        \begin{enumerate}
            \item For any MDP in the family, with probability at least $1-\delta$ (over the sampled trajectories), Algorithm \ref{def:alg_working} outputs an optimal policy for the MDP with a number of samples and number of operations upper bounded by a polynomial in $H$ and $\nicefrac{1}{\delta}$.
            \item For any algorithm satisfying Assumption \ref{assum:RL-class-with-interface} and using $o(2^H)$ calls to the interface (Algorithm \ref{alg:interface}), there exists a problem in the family for which it outputs a suboptimal policy with probability at least $\nicefrac{1}{3}$.
        \end{enumerate}
    \end{restatable}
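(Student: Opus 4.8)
The plan is to build a single family of ``combination-lock with an observable hint'' MDPs that is easy for Algorithm~\ref{def:alg_working} but hard for every coordinate-symmetric method. Index a member by a hidden object $\theta$ drawn from a set of size $\geq 2^{H}$ --- concretely an injective assignment to each step $t$ of a distinct ``hint coordinate'' $c_t$ (together, if needed, with a hidden per-step key bit) --- embedded in ambient dimension $n=\mathrm{poly}(H)$ alongside many i.i.d.\ ``decoy'' coordinates that are statistically exchangeable with the hint coordinates. Arrange the dynamics so that: (i) at each step the environment writes into coordinate $c_t$ of the current state exactly what an optimal policy needs in order to act, so that the optimal policy is a \emph{reactive, coordinate-reading} rule; (ii) the reward is shaped so that a uniformly random policy reaches a reward-maximizing terminal state with probability at least $1/\mathrm{poly}(H)$, and so that this event is detectable from a polynomial sample; and (iii) relabelling coordinates by a permutation $p$ sends the member $\theta$ to $p(\theta)$ and permutes trajectories accordingly --- the symmetry the lower bound will exploit.

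For the first statement, take $N=\mathrm{poly}(H,1/\delta)$ rollouts under $\pi^{U}$. A Chernoff bound gives that, with probability $\geq 1-\delta/2$, the dataset $D$ contains a reward-maximizing terminal state --- so the goal $g$ is optimal --- and simultaneously contains, for every $t$, enough transitions for the empirical program~\eqref{eq:empirical-opt} to be well-behaved. The core step is that the $\alpha$-sparse linear-threshold ERM $f^{t}$ on $D^{t}_{GC}$ recovers, up to a measure-zero set of inputs, the rule that decodes $a_t$ from the hint coordinate (``output coordinate $c_t$''): this rule is consistent with $D^{t}_{GC}$, whereas the decoy coordinates carry no information about $a_t$, so no equally sparse competitor can be; combined with a uniform-convergence bound for the class of $\alpha$-sparse halfspaces, whose VC dimension is $O(\alpha\log n)=\mathrm{poly}(H)$, this pins $f^{t}$ down. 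Hence the returned policy $\pi(a\mid s_t)=f^{t}(a\mid s_t,g)$ agrees with the optimal reactive policy and is optimal. The operation count is polynomial since~\eqref{eq:empirical-opt} can be solved by enumerating the $\binom{n}{\alpha}=\mathrm{poly}(H)$ supports and fitting a halfspace on each.

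For the second statement, fix $A$ obeying Assumption~\ref{assum:RL-class-with-interface} and making $T=o(2^{H})$ calls to the interface of Algorithm~\ref{alg:interface}. Because every $f\in\F$ and the aggregator $g$ are invariant under coordinate permutations, everything $A$ sees --- the pointers, the vectors $[\,f(s,D)\,]_{f\in\F}$, and the rewards --- has, for a uniformly random hidden $\theta$, a distribution that is exchangeable in the coordinates: $A$ can only ever perceive the hint through $\F$, which is blind to \emph{which} coordinate carries it, so the only channel that separates members of the family is the scalar reward. I would make this rigorous by a coupling argument against the ``all-decoy'' environment: each reward $A$ receives is, for all but an $O(T\cdot 2^{-H})$ fraction of $\theta$, distributed exactly as in the all-decoy environment, so with probability $1-o(1)$ the entire transcript of $A$ --- hence its output policy $\pi_A$ --- is independent of $\theta$. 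Since the optimal reactive policy for $\theta$ reads coordinate $c_t$ at step $t$, which is a pure decoy for a generic $\theta'$, running $\pi_A$ against such a $\theta'$ only achieves the value of a uniformly random policy, which by construction~(ii) is bounded away from the optimum. An averaging argument over $\theta$ then exhibits a member on which $\pi_A$ is suboptimal with probability at least $1/3$.

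The step I expect to be the main obstacle is reconciling property~(ii) with the needle-like behaviour needed in the lower bound. The reward must be \emph{broad} enough that $\pi^{U}$ lands on an optimal terminal state with inverse-polynomial probability --- otherwise the goal $g$ of Algorithm~\ref{def:alg_working} is useless --- yet it must leak \emph{no progressively exploitable signal} through the symmetric channel, since any reward from which the hidden object could be learned piece by piece (e.g.\ ``length of the correct prefix'') would let a symmetric method solve the family in $\mathrm{poly}(H)$ calls and collapse the $2^{H}$ separation. Designing the dynamics and reward so that both hold simultaneously --- and then tightening the two quantitative ingredients above, namely the ERM-recovery argument in the first claim and the exchangeability/coupling estimate in the second --- is where the real work lies.
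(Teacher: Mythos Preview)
Your proposal has a genuine gap in the lower-bound construction that stems from a misreading of the interface in Algorithm~\ref{alg:interface}. You assume that an algorithm constrained by Assumption~\ref{assum:RL-class-with-interface} can only perceive states through the symmetric evaluations $[f(s',D)]_{f\in\F}$, but that is not so: $\mathrm{Env\_step}(\bar s,a,g,\F)$ returns the decoded state $s$ itself in the clear, and $\mathrm{Env\_init}$ returns $s_0$ in the clear. Only the \emph{next} state $s'$ is obfuscated, and only terminal states --- those on which $\mathrm{Env\_step}$ returns $\bot$ --- are never exposed directly. In your construction the environment writes the per-step hint into coordinate $c_t$ of the \emph{current} (non-terminal) state, so a method satisfying Assumption~\ref{assum:RL-class-with-interface} can simply step once from $\bar s_t$, read $s_t$ from the return value, and extract the hint; the coordinate symmetry of $\F$ never comes into play. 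Relatedly, the symmetry argument $f(s,D)=f(p(s),p(D))$ only buys you indistinguishability if there exists a permutation $p$ that maps one candidate state to another \emph{and fixes every state stored in $D$}; since non-terminal states are stored in $D$ and in your design they carry nonzero hint coordinates, no such $p$ exists.

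The paper resolves both this issue and the tension you correctly flag in your last paragraph with one device: a two-branch MDP. All coordinates that depend on the hidden word $b$ are zero in every non-terminal state and are written only at the \emph{terminal} step, so (i) the only states that distinguish members of the family are never returned in the clear, and (ii) the permutation sending one non-rewarding terminal state to another acts trivially on every state in $D$, which is exactly what the symmetry of $\F$ needs. The ``easy discovery of the goal'' that you want from property~(ii) is supplied by a separate uncontrolled branch (entered with probability $\tfrac12$) that reaches the rewarding terminal state with constant probability \emph{without} revealing anything about $b$; the controlled branch is a pure $2^{H-2}$-arm bandit through the symmetric lens. Your ERM analysis for the upper bound is then redirected: $f^t$ does not read a hint in $s_t$ but rather a single coordinate of $s_H$ (the goal), namely the bit recording whether $a_t$ matched $b_t$, and applying the learned rule with $g$ equal to the ``all-correct'' terminal state reproduces $b_t$. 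This is the missing idea; without pushing the discriminating information into terminal states, your exchangeability/coupling argument for the lower bound cannot go through.
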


    The complete proof is given in Appendix \ref{sec-a:main-thm}, it is partially inspired by some of the proofs in \citet{sun2019model}. We give a sketch here with the principal intuitions.

    We define explicitly a family of MDPs satisfying the requirements of the Theorem. We define an MDP of the family by its horizon and by $b$ a binary word that represents a sequence of actions that solves the task. A representation of one element of the family for a small horizon is given in Figure \ref{fig:example-family-RL}.

    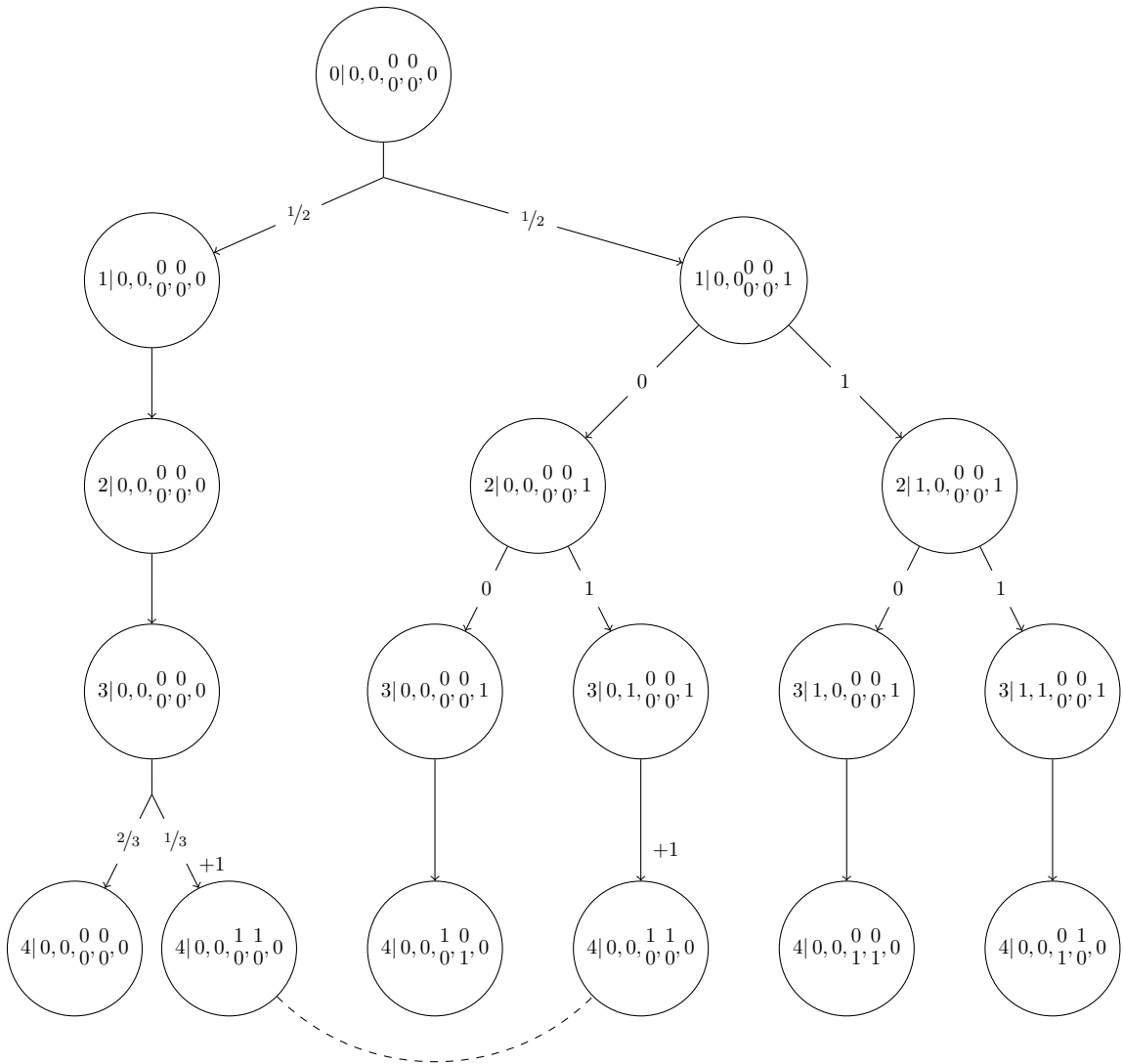
\begin{figure*}[ht!]
        \centering
        \begin{tikzpicture}[sibling distance=40mm,scale=0.35]
            \begin{scope}[every node/.style = {shape=circle, align=center, minimum size=0.7cm, scale=0.8}]

                \node[draw] (0) at (-14,8) {$0|\,0,0,\begin{matrix}0\\0\end{matrix},\begin{matrix}0\\0\end{matrix},0$};
                
                \node[draw] (1) at (0,0) {$1|\,0,0\begin{matrix}0\\0\end{matrix},\begin{matrix}0\\0\end{matrix},1$};
                
                \node[draw] (2) at (-8,-8) {$2|\,0,0,\begin{matrix}0\\0\end{matrix},\begin{matrix}0\\0\end{matrix},1$};
                \node[draw] (3) at (8,-8) {$2|\,1,0,\begin{matrix}0\\0\end{matrix},\begin{matrix}0\\0\end{matrix},1$};
                
                \node[draw] (4) at (-12,-16) {$3|\,0,0,\begin{matrix}0\\0\end{matrix},\begin{matrix}0\\0\end{matrix},1$};
                \node[draw] (5) at (-4,-16) {$3|\,0,1,\begin{matrix}0\\0\end{matrix},\begin{matrix}0\\0\end{matrix},1$};
                \node[draw] (6) at (4,-16) {$3|\,1,0,\begin{matrix}0\\0\end{matrix},\begin{matrix}0\\0\end{matrix},1$};
                \node[draw] (7) at (12,-16) {$3|\,1,1,\begin{matrix}0\\0\end{matrix},\begin{matrix}0\\0\end{matrix},1$};
                
                \node[draw] (8) at (-12,-26) {$4|\,0,0,\begin{matrix}1\\0\end{matrix},\begin{matrix}0\\1\end{matrix},0$};
                \node[draw] (9) at (-4,-26) {$4|\,0,0,\begin{matrix}1\\0\end{matrix},\begin{matrix}1\\0\end{matrix},0$};
                \node[draw] (10) at (4,-26) {$4|\,0,0,\begin{matrix}0\\1\end{matrix},\begin{matrix}0\\1\end{matrix},0$};
                \node[draw] (11) at (12,-26) {$4|\,0,0,\begin{matrix}0\\1\end{matrix},\begin{matrix}1\\0\end{matrix},0$};
                
                \draw[->] (1) -- (2) node[midway,fill=white] {$0$};
                \draw[->] (1) -- (3) node[midway,fill=white] {$1$};
                \draw[->] (2) -- (4) node[midway,fill=white] {$0$};
                \draw[->] (2) -- (5) node[midway,fill=white] {$1$};
                \draw[->] (3) -- (6) node[midway,fill=white] {$0$};
                \draw[->] (3) -- (7) node[midway,fill=white] {$1$};
                \draw[->] (4) -- (8);
                \draw[->] (5) -- (9) node[near end,right] {$+1$};
                \draw[->] (6) -- (10);
                \draw[->] (7) -- (11);
                
                \node[draw] (12) at (-23,0) {$1|\,0,0,\begin{matrix}0\\0\end{matrix},\begin{matrix}0\\0\end{matrix},0$};
                \node[draw] (13) at (-23,-8) {$2|\,0,0,\begin{matrix}0\\0\end{matrix},\begin{matrix}0\\0\end{matrix},0$};
                \node[draw] (14) at (-23,-16) {$3|\,0,0,\begin{matrix}0\\0\end{matrix},\begin{matrix}0\\0\end{matrix},0$};
                \node[draw] (15) at (-26,-26) {$4|\,0,0,\begin{matrix}0\\0\end{matrix},\begin{matrix}0\\0\end{matrix},0$};
                \node[draw] (16) at (-20,-26) {$4|\,0,0,\begin{matrix}1\\0\end{matrix},\begin{matrix}1\\0\end{matrix},0$};
                
                \coordinate (17) at (-23,-20) {};
                \coordinate (18) at (-14,4) {};
                \draw[->] (12) -- (13);
                \draw[->] (13) -- (14);
                \draw[-] (14) -- (17) {};
                \draw[->] (17) -- (15) node[midway,fill=white] {$\nicefrac{2}{3}$};
                \draw[->] (17) -- (16) node[midway,fill=white] {$\nicefrac{1}{3}$}
                                        node[near end,right] {$+1$};
                \draw[-] (0) -- (18) {};
                \draw[->] (18) -- (12) node[midway,fill=white] {$\nicefrac{1}{2}$};
                \draw[->] (18) -- (1) node[midway,fill=white] {$\nicefrac{1}{2}$};

                \draw[dashed] (16) to[in=-135,out=-45] (9);
                
            \end{scope}
        \end{tikzpicture}
        \caption{Example of the family of RL problems for $H=4$ and $b=01$. On the left-hand side, the agent can easily discover the rewarding state. However, the path to reach it is suboptimal. By understanding and using the dynamics on the right-hand side, the algorithm can uncover the optimal path.}
        \label{fig:example-family-RL}
    \end{figure*}

    For the dynamics, at the initial state, there are two possible continuations that are drawn randomly by the RL problem. The left-hand side will give a trajectory over which the agent has no control and ends up randomly either in a non-rewarding state or a rewarding state. On the right-hand side, in that part the agent has full control over the trajectories which can end in a non-rewarding state or a unique rewarding state.

    For algorithms satisfying Assumption \ref{assum:RL-class-with-interface}, the right-hand side has a number of possible end states which grows exponentially in the horizon, and the unique rewarding state becomes hard to reach by purely random actions. Moreover, these algorithms do not have access to relevant information to orient the search: the left-hand side dynamics is independent of $b$, and we show in the proof that the information in the end states of the right-hand side is obfuscated due to the Assumption.

    For Algorithm \ref{def:alg_working}, the left-hand side allows the agent to easily discover a rewarding state and set it as its goal. On the right-hand side, the dynamics is sufficiently simple for its learning procedure to perfectly predict the necessary sequence of actions to reach this goal. Its returned policy will thus act perfectly to reach the rewarding state on the right-hand side. 

\section{Numerical experiments}
    We demonstrate numerically how practical deep RL methods perform on the family of RL problems constructed in the proof. We test Algorithm \ref{def:alg_working} but with neural networks trained by gradient descent as a learning procedure instead. We test a more practical version of the fitted Q-iteration \cite{riedmiller2005neural}, presented in Algorithm \ref{alg:fitted-Q-iter-original}. We also implement and run a classical Actor-Critic method, Proximal Policy Optimization (PPO) \cite{schulman2017proximal}. We refer to Appendix \ref{sec:num-exp} for more details on the implementations.

    We apply these methods to problems of the family with increasing horizons. We sample a dataset of $1000$ trajectories for the goal-conditioned method. For the fitted Q-iteration and PPO methods, we sample $50$ times $1000$ trajectories during training. To measure the success of a method, we check if its returned policy reaches the rewarding state on the right-hand side with $1000$ sampled trajectories, this allows the goal-conditioned algorithm to do some exploration. The results are presented on Figure \ref{fig:numerical-results}.

    \begin{figure}[ht!]
        \centering
        \includegraphics[scale=0.51]{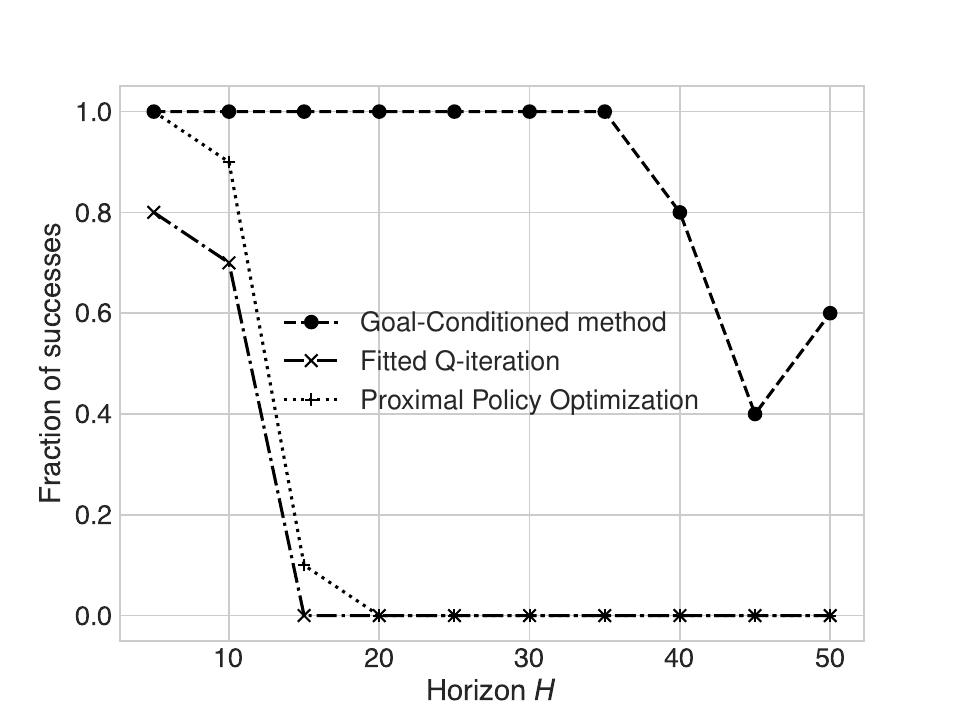}
        \caption{Fraction of successes over 10 runs as a function of the horizon for each method.}
        \label{fig:numerical-results}
    \end{figure}

    The model-free methods quickly fail to solve the task when the horizon is increased. While the goal-conditioning method successfully solves the task for much longer. Both of these observations are suggested by our theoretical findings.

\section{Conclusion and discussion}
    We defined a new class of RL methods that encompasses model-free, such as Q-Learning and policy gradient, and several model-based methods. For this class, we showed an efficiency limitation on a family of problems that are efficiently solvable by an RL method generally applicable to goal-reaching problems.

    The problems in the family feature two different sides with their different respective dynamics but with a common unique rewarding state. The first side has a dynamics that allows an algorithm to easily discover a rewarding state. However, this side has no optimal way to reach that state. The second side possesses an optimal path but it is hard to find it without seeking to reach the rewarding state.
    
    To summarize intuitively our findings, a large class of RL methods will unnecessarily struggle in environments where there exists both: an easy-to-explore but suboptimal-to-use dynamics and a second dynamics which is hard to explore, but easy to navigate if one leverages a known goal that has been discovered in the first dynamics.

    There exist several types of methods proposed in the literature that evade the class of RL methods on which we prove the limitation. We identified the followings:
    \begin{itemize}
        \item Algorithms that learn a smooth model of the dynamics then use backpropagation through the learned model \cite{nguyen1990neural,jordan2013forward,deisenroth2011pilco,grondman2015online,heess2015learning}. These methods elude our definition because the learned model is not treated as a black box by the algorithm. We note that these methods are mostly used in environments where the dynamics are approximately smooth, such as low-level control in robotics.
        
        \item Algorithms that construct a symbolic or semi-parametric model of the dynamics and then apply a symbolic planning algorithm \cite{konidaris2018skills}. This example is related to the previous one, it evades the limitation when the symbolic planner does not treat the model only as a black box generator of transitions but leverages insights in it.
        
        \item Universal value functions \cite{sutton2011horde,schaul2015universal} can be learned to decide which states can be efficiently reached from which states. These algorithms bypass our definition because the value functions take as input the states in the future of the trajectory, and the quantities they learn cannot be replaced trivially by a generator of transitions. 
        
        \item A wide variety of algorithms that learn a link from the future to the past of a trajectory. An example is learning an inverse dynamics model. These algorithms avoid the limitation for the same reason as the universal value functions: the future of the trajectory is directly used to learn the relevant functions, and there is no trivial efficient way to replace the computed quantities with a generator of transitions.

        Inverse dynamics models or goal-conditioning methods learn to predict the action to take given the current state and a state to reach in the future \cite{ghosh2019learning,emmons2021rvs}. There exist variants of this idea, \citet{janner2022planning} proposes to learn to map the current and future state to a full sequence of intermediary states. It is also possible to condition on other information than future states, such as future returns \cite{schmidhuber2019reinforcement,kumar2019reward,chen2021decision,emmons2021rvs}.
        
        These different functions can be efficiently learned with Hindsight Experience Replay, where what is reached in a trajectory is relabelled as a goal in hindsight for training \cite{kaelbling1993learning,andrychowicz2017hindsight}. 
        
        We note however that not all these methods are sound in the presence of uncertainty, as described (and alleviated) in \citet{paster2022you,eysenbach2022imitating,yang2022dichotomy,villaflor2022addressing}.
    \end{itemize}

    Not all of these algorithms necessarily solve efficiently the family of RL problems we defined. Our theoretical result suggests that the ideas present in them could help solve problems otherwise intractable by a large class of classical RL methods.

\bibliographystyle{unsrtnat}
\bibliography{biblio}

\clearpage
\appendix
\section{Main Theorem proof}\label{sec-a:main-thm}
    \mainthm*
    \begin{proof}
    We define the family of RL problems. The family is parametrized by $H$, the horizon, and by a hidden fixed binary word $b$ of length $H-2$. Each state is defined by $3(H-2)+1$ variables. The action space is binary. An example in this family for $H=4$ and $b=01$ is represented on Figure \ref{fig:example-family-RL}.

    We pose some notations to describe the states. Each state is decomposed into three parts $a,\,b,$ and $c$, the part $b$ is further decomposed into parts $u$ and $d$. The time step to which the state belongs is kept implicit in our description. For $i$ between $1$ and $H-2$:
    \begin{itemize}
        \item $s^{a,i}$ the $i$th variable in the first part of the state vector;
        \item $s^{b,u/d,i}$ if $u$, the up $i$th variable in the second part of the state vector, if $d$, the down $i$th variable in the second part of the state vector;
        \item $s^{c}$ the variable of the third part of the state vector.
    \end{itemize}
    Example of the notation for state vector $s$:
    \begin{equation*}
        \left[
        s^{a,1},s^{a,2},\ldots,s^{a,i},\ldots,s^{a,H-2},\begin{matrix}s^{b,u,1}\\s^{b,d,1}\end{matrix},\begin{matrix}s^{b,u,2}\\s^{b,d,2}\end{matrix},\ldots, \begin{matrix}s^{b,u,i}\\s^{b,d,i}\end{matrix},\ldots,\begin{matrix}s^{b,u,H-2}\\s^{b,d,H-2}\end{matrix},s^c
        \right].
    \end{equation*}

    Now we describe the dynamics.
    At step $0$, whatever action is taken, with probability $\nicefrac{1}{2}$, we reach either a state full of zeros, or a state full of zeros except for $s^{c}$ which equals one.

    If we have $s^c=0$, then for the steps $t=1$ to $t=H-1$, the state vector stays zero. At the last step with probability $\nicefrac{1}{3}$ it gives  a reward of $1$ and the state with $s^a=0^{H-2}$, $s^{b,u}=1^{H-2}$, $s^{b,d}=0^{H-2}$ and $s^c=0$. With probability $\nicefrac{2}{3}$, it gives zero-reward and keeps all its variables to zero.

    If $s^c=1$, then from time step $t=1$ to $t=H-2$, the transition from $t$ to $t+1$ encodes $a_t$ into $s_{t+1}^{a,t}$ with zero reward (the other variables keep their values). At the last step, the dynamics fixes $s_H^c=0$, all the variables that encode a past action to zero $s_H^{a}=0^{H-2}$, and $s_{H}^b$ to express if each action taken was correct $\begin{pmatrix}1\\0\end{pmatrix}$ or not $\begin{pmatrix}0\\1\end{pmatrix}$ with respect to a fixed optimal trajectory given by $b$ the binary word parametrizing the problem. In other words, $s_{H}^{b,u,i}=\delta(s_t^{a,i}=b_i)$ and $s_H^{b,d,i}=\delta(s_t^{a,i}\neq b_i)$. At that step, if $s_H^{b,u}=1^{H-2}$, then a reward of $1$ is given. Such that, a reward is obtained only if all the actions taken follow the hidden binary word $b$.

    We prove the first point of the Theorem. Fix any $\delta\in(0,1)$.

    There exists a lower-bound polynomial in $\log \nicefrac{1}{\delta}$ on the number of sampled trajectories, $K$, such that for any horizon, $H$, with probability at least $1-\nicefrac{\delta}{2}$ the rewarding state is discovered with the dynamics on the left-hand side ($s^c=0$).
    
    Moreover, we demonstrate in Lemma \ref{lem:GC-efficient}, that Algorithm \ref{def:alg_working} obtains, for $\alpha=1$ and any $K$ larger than some polynomial in $H$ and $\nicefrac{1}{\delta}$, with probability at least $1-\nicefrac{\delta}{2}$ over the sampled dataset, a goal-conditioned function which predicts correctly the used action on the right-hand side (when $s^c=1$).

    By choosing $K$ sufficiently large and applying the union bound, the probability that one of these events fails is bounded by $\delta$. The policy $\pi(a|s_t)=f^t(a|s_t,g)$ will thus perfectly solve the problem when $s_t^c=1$. When $s_t^c=0$, the policy does not influence the expected reward. The policy is thus optimal.
    
    Moreover, this Algorithm can be efficiently implemented from a computational complexity point of view.

    Now we prove the second claim. We will prove that the part of the problem where $s^c$ equals ones is indistinguishable from an armed-bandit problem family (Definition \ref{def:armed-bandit}) with $2^{H-2}$ arms where only one arm gives a reward. Following Proposition \ref{prop:bandit-hard}, any algorithm using $o(2^H)$ calls will fail to identify the best arm with probability at least $\nicefrac{1}{3}$ on some problem in the armed-bandit problems family. This entails the Theorem.

    In the rest of the proof, we will suppose that we are on the branch where $s^c=1$.
    We prove that all the states at the last steps which have zero-reward are indistinguishable.
    Due to Assumption \ref{assum:RL-class-with-interface}, since these states are final, they cannot be added as direct input in the dataset $D$, such that, information on these states can only be obtained through evaluations of functions $f\in\F$ with one of these states as the first argument.
    
    We prove that these evaluations are identical for all those states. These states only contain non-zero elements in coordinates which are always zero in the states present in the dataset. Moreover, they have exactly the same number of ones in those coordinates. Thus, $f\in\F$, which is symmetric on permutations of the coordinates, cannot distinguish them. Formally, let be two final states on the right-hand side $w,z$, there exists a permutation of the coordinates $p$ such that $w=p(z)$ and $s=p(s)$ for all states $s$ present in the dataset $D$. This implies, for any $f\in\F$,
    \begin{equation*}
    \begin{split}
        f(z,((s_1,y_1),\ldots,(s_N,y_N)))&=f(p(z),((p(s_1),y_1),\ldots,(p(s_N),y_N)))\\
            &=f(w,((s_1,y_1),\ldots,(s_N,y_N))).
    \end{split}
    \end{equation*}

    Thus, non-rewarding final states on the right-hand side are indistinguishable for algorithms satisfying Assumption \ref{assum:RL-class-with-interface}. The family of dynamics on the right-hand side is thus equivalent to the family of armed-bandit problems.
    
    \end{proof}

    The following definitions and affirmations are classical results from VC dimension theory and can be found in the reference \citet{shalev2014understanding}.
    \begin{definition}{Hypothesis class.}\\
        A set $\mathcal{H}$ is a hypothesis class if $\mathcal{H}$ is a set composed of functions from some domain $\X$ to $\{0,1\}$.
    \end{definition}
    \begin{definition}{Shattering.}\\
        A finite set $S\subseteq \X$ is shattered by a hypothesis class $\mathcal{H}$ if for any function $f$ from $S$ to $\{0,1\}$, there exists $h\in\mathcal{H}$ such that for all $x\in S$ we have $f(x)=h(x)$.
    \end{definition}
    \begin{definition}{VC-dimension.}\\
        Given a set $\X$, the VC-dimension of a hypothesis class defined on domain $\X$ is the maximal integer $d$ such that there exists a subset of $\X$ of size $d$ which is shattered by $\mathcal{H}$.
    \end{definition}
    \begin{proposition}{VC-dimension of the union.}\label{prop:VC-union}\\
        Given $r$ hypothesis classes of VC-dimension at most $d$ sharing the same domain, the VC-dimension of the union of these classes is at most $4d\log(2d) + 2\log(r)$.
    \end{proposition}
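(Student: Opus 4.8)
The plan is to run the standard ``union bound on dichotomies plus Sauer--Shelah'' argument. Write $\mathcal{H} = \bigcup_{i=1}^{r}\mathcal{H}_i$ for the $r$ classes, each with $\mathrm{VCdim}(\mathcal{H}_i)\le d$, and let $S\subseteq\X$ be a set of size $m$ that is shattered by $\mathcal{H}$; the goal is to show $m\le 4d\log(2d)+2\log r$. If $m\le d$ the conclusion is immediate, since $4d\log(2d)\ge 4d\ge d$ (base-two logs, $2d\ge 2$) and $\log r\ge 0$, so I would assume from now on that $m>d$.

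Because $S$ is shattered by $\mathcal{H}$, the restriction $\mathcal{H}|_S$ realizes all $2^m$ labelings of $S$. On the other hand $\mathcal{H}|_S=\bigcup_{i=1}^{r}\mathcal{H}_i|_S$, so $2^m\le\sum_{i=1}^{r}\bigl|\mathcal{H}_i|_S\bigr|$. By Sauer's lemma each class of VC-dimension at most $d$ realizes at most $\sum_{j=0}^{d}\binom{m}{j}$ labelings on a set of size $m$, and this quantity is bounded above by a fixed polynomial in $m$ of degree $d$ (for instance $(em/d)^d$, or a crude $(d{+}1)m^d$). Hence $2^m\le r\cdot\mathrm{poly}_d(m)$, and taking base-two logarithms yields an inequality of the shape $m\le d\log m + c$, where $c$ is of order $\log r$ up to lower-order terms in $d$ coming from the polynomial factor.

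The final step --- and the only one that is not bookkeeping --- is to disentangle $m$ from $\log m$ in this transcendental inequality. The plan is to invoke the standard algebraic fact that, for $a\ge 1$ and $b>0$, the inequality $x\le a\log x + b$ forces $x\le 4a\log(2a)+2b$ (proved by substituting the claimed bound back into the right-hand side and using concavity/monotonicity of $\log$). Applying this with $a=d$ and $b$ the constant collected above, then absorbing the residual lower-order $d$-terms into the $4d\log(2d)$ summand (verifying the few smallest values of $d$ by hand if the absorption is tight there), gives $m\le 4d\log(2d)+2\log r$, which is exactly the claim. The one delicate point I would watch is the choice of the crude polynomial bound in Sauer's lemma, since it is what makes the constants line up with the stated $4d\log(2d)+2\log r$; the rest of the argument is routine VC-theory manipulation.
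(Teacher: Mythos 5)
The paper does not prove this proposition at all; it states it as a classical fact and cites \citet{shalev2014understanding}, where it appears as an exercise whose intended solution is exactly your argument (union bound on the restricted dichotomies, Sauer--Shelah, then the standard lemma that $x\le a\log x+b$ forces $x\le 4a\log(2a)+2b$, which is Lemma~A.2 of that reference). Your proposal is correct and coincides with that standard route, including the honest caveat that one should use the $(em/d)^d$ form of Sauer's bound (or check small $d$ separately) so the constants close to $4d\log(2d)+2\log r$.
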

    \begin{proposition}{Agnostic PAC-learnability from bounded VC-dimension.}\label{prop:agnostic-PAC-learning}\\
        There exists a constant $C_1\in\R$ such that for any $\delta,\epsilon\in(0,1)$, given a hypothesis class $\mathcal{H}$ of VC-dimension $d$, and a dataset $S\subset\X\times\{0,1\}$ of size at least $C_1\frac{d+\log(\nicefrac{1}{\delta})}{\epsilon^2}$ composed of identically and independently drawn samples $(x,y)$ according to a probability measure $P$ on $\X\times\{0,1\}$, we have, with probability at least $1-\delta$ (over the sampled dataset),
        \begin{equation*}
            L_P(\argmin_{h\in\mathcal{H}} L_S(h)) \leq \min_{h'\in\mathcal{H}} L_P(h') + \epsilon,
        \end{equation*}
        where $L_P(h)$ is the error rate of classification according to measure $P$, $\mathbb{E}_{(x,y)\sim P}[\delta(h(x)\neq y)]$, and $L_S(h)$ is the error rate on the dataset, $\frac{1}{\abs{S}}\sum_{(x,y)\in S}\delta(h(x)\neq y)$.
    \end{proposition}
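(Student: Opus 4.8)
The plan is the standard two-step route to an agnostic-PAC guarantee. First I would reduce the claim about the empirical risk minimiser to a \emph{uniform-convergence} statement, and then establish uniform convergence over $\mathcal{H}$ from the bound on its VC-dimension. For the reduction, suppose the sample $S$ enjoys the uniform-convergence property $\sup_{h\in\mathcal{H}}\abs{L_S(h)-L_P(h)}\leq \epsilon/2$. Writing $\hat h=\argmin_{h\in\mathcal{H}}L_S(h)$ and $h^\star=\argmin_{h'\in\mathcal{H}}L_P(h')$, and combining $L_S(\hat h)\leq L_S(h^\star)$ (optimality of $\hat h$ on $S$) with uniform convergence applied to $\hat h$ and to $h^\star$,
\[
L_P(\hat h)\;\leq\;L_S(\hat h)+\tfrac{\epsilon}{2}\;\leq\;L_S(h^\star)+\tfrac{\epsilon}{2}\;\leq\;L_P(h^\star)+\epsilon,
\]
which is exactly the asserted inequality. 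It therefore suffices to prove that, whenever $\abs{S}\geq C_1(d+\log(\nicefrac{1}{\delta}))/\epsilon^2$, the event $\sup_{h\in\mathcal{H}}\abs{L_S(h)-L_P(h)}\leq\epsilon/2$ has probability at least $1-\delta$.

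For the uniform-convergence step I would pass to the loss class $\ell_{\mathcal H}=\{(x,y)\mapsto\delta(h(x)\neq y):h\in\mathcal{H}\}$, which is $\{0,1\}$-valued and has the same VC-dimension $d$ as $\mathcal{H}$ (on distinct points a $\{0,1\}$-pattern of $\ell_{\mathcal H}$ on $(x_i,y_i)$ corresponds bijectively to a pattern of $\mathcal{H}$ on the $x_i$). The classical estimate then glues together three ingredients. A symmetrisation argument bounds $\E_S[\sup_h\abs{L_S(h)-L_P(h)}]$ by twice the Rademacher complexity of $\ell_{\mathcal H}$ on a sample of size $m$. Next, on any $m$ points $\ell_{\mathcal H}$ realises at most $\tau(m)\leq(em/d)^d$ distinct patterns by the Sauer--Shelah lemma, so Massart's finite-class lemma bounds that Rademacher complexity by $\sqrt{2\log\tau(m)/m}=O\!\big(\sqrt{(d/m)\log(m/d)}\big)$. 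Finally, $S\mapsto\sup_h\abs{L_S(h)-L_P(h)}$ has bounded differences $1/m$, so McDiarmid's inequality upgrades the in-expectation bound to one holding with probability $1-\delta$ at the extra cost of an additive $O(\sqrt{\log(\nicefrac{1}{\delta})/m})$. Choosing $m=\abs{S}\geq C_1(d+\log(\nicefrac{1}{\delta}))/\epsilon^2$ makes the total deviation at most $\epsilon/2$, which closes the argument.

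The one genuine obstacle is the stray factor $\log(m/d)$ produced by the union bound (equivalently, by Massart's lemma): it only yields the slightly weaker sample size $\tilde O\big((d+\log(\nicefrac{1}{\delta}))/\epsilon^2\big)$. Obtaining the clean $O\big((d+\log(\nicefrac{1}{\delta}))/\epsilon^2\big)$ asserted here requires the sharper route through Haussler's bound on the $L_1(P)$-covering numbers of a VC-class of dimension $d$ (at most $(C/\eta)^d$, uniformly in $P$) together with a Dudley entropy-integral/chaining bound on the Rademacher complexity, whose integral $\int_0^1\sqrt{d\log(C/\eta)}\,\mathrm d\eta$ converges and removes the logarithm. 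This refinement is where essentially all the real work lies; everything else is routine. For the use made of this proposition elsewhere in the paper the weaker $\tilde O$ form would already suffice, since only a polynomial dependence on $H$ and $\nicefrac{1}{\delta}$ is ultimately needed. A complete treatment of both the routine steps and the chaining refinement can be found in \citet{shalev2014understanding}.
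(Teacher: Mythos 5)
The paper does not prove this proposition at all: it is stated as a classical result and delegated entirely to \citet{shalev2014understanding}, the same reference you cite. Your sketch is the standard textbook argument (ERM reduced to uniform convergence, then symmetrisation, Sauer--Shelah and Massart, with McDiarmid for the high-probability statement), and you correctly flag that this route leaves a spurious $\log(m/d)$ factor which only the chaining/Haussler refinement removes, and that the weaker $\tilde O$ sample complexity would anyway suffice for how the proposition is used in Lemma \ref{lem:GC-efficient}. So your proposal is correct and fully consistent with the paper's treatment.
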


    \begin{lemma}\label{lem:GC-efficient}
        For Algorithm \ref{def:alg_working} with parameter $\alpha=1$ and any $\delta\in(0,1)$ on the family presented in the proof of Theorem \ref{thm:main_thm}, there exists $m$ a polynomial in $H$ and $\nicefrac{1}{\delta}$, such that with a dataset of trajectories larger than $m(H,\nicefrac{1}{\delta})$ obtained by $\pi^U$, with probability at least $1-\delta$ over the sampled dataset, for all time steps $1\leq t\leq H-2$ and with $s_t^c=1$, the learned function $f^t(a|\,s_t,g)$ perfectly predicts the action to reach the goal $g$, if the state $g$ is reachable from $s_t$.
    \end{lemma}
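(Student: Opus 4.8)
The plan is to observe that, with $\alpha=1$, Algorithm \ref{def:alg_working} performs empirical risk minimization over a hypothesis class of small VC-dimension, invoke the agnostic PAC guarantee of Proposition \ref{prop:agnostic-PAC-learning}, and then show that on the family of Theorem \ref{thm:main_thm} the population risk of that class has a \emph{unique} near-minimizer which is exactly the goal-conditioned predictor we want; hence the learned $f^t$ must equal it.

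First I would fix the hypothesis class. For $\alpha=1$, the feasible set of the program \eqref{eq:empirical-opt} is $F_1=\{x\mapsto\delta(\scalarprod{w}{x}>0):w\in\R^{2n},\ \norm{w}_0\le1\}$, which is the union, over the $2n=6(H-2)+2$ coordinates $j$, of the one-dimensional threshold classes $\{x\mapsto\delta(x_j>0),\ x\mapsto\delta(x_j<0)\}$, together with the constant zero map. Each of these classes has VC-dimension $1$, so by Proposition \ref{prop:VC-union} the VC-dimension of $F_1$ is $O(\log H)$. For a fixed time step $t$, the entries of $D^t_{GC}$ are i.i.d.\ draws of $((s_t,s_H),a_t)$ under $P^{\pi^U}$, so Proposition \ref{prop:agnostic-PAC-learning} applies: if $N$ (the number of sampled trajectories) is at least a suitable polynomial in $H$ and $1/\delta'$, then with probability at least $1-\delta'$ the minimizer $f^t$ has population risk within $\epsilon$ of $\min_{h\in F_1}L_P(h)$.

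The core step is computing $L_P(h)$ for every $h\in F_1$. Recall the dynamics: with probability $1/2$ the trajectory lies on the left-hand side, where $s_t=0$ for all $1\le t\le H-1$ and $s_H$ is either all zeros or the unique rewarding state $g$, the action $a_t$ being an independent fair coin in both cases; and with probability $1/2$ it lies on the right-hand side, where $s_t$ records the past actions in its $s^{a,\cdot}$ coordinates, $s_H^{b,u,i}=\delta(a_i=b_i)$, $s_H^{b,d,i}=\delta(a_i\ne b_i)$, $s_H^a=0$, $s_H^c=0$, and each $a_i$ is again an independent fair coin. Two facts drive the argument: (i) on the left-hand side, which carries probability $1/2$, every $h\in F_1$ reads an all-zero vector, outputs $0$, and has error $1/2$ there, identically for all $h$; and (ii) on the right-hand side the value of $a_t$ is revealed only by the two coordinates $s_H^{b,u,t}$ and $s_H^{b,d,t}$ — each of which equals $a_t$ or $1-a_t$ according to whether $b_t=1$ or $b_t=0$ — while every other coordinate of $(s_t,s_H)$ is constant or an independent fair coin, and among the associated thresholds exactly one — a positive weight on $s_H^{b,u,t}$ if $b_t=1$, a positive weight on $s_H^{b,d,t}$ if $b_t=0$ — recovers $a_t$ with zero error, every other threshold having error at least $1/2$ on the right-hand side. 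Consequently this distinguished hypothesis $h^*_t$ has total risk $1/4$, whereas every other member of $F_1$ has total risk at least $1/2$.

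Taking $\epsilon=1/8$ then gives $L_P(f^t)\le1/4+1/8<1/2$, which by the previous step forces $f^t=h^*_t$ exactly. Since $g$ is the rewarding state, $g^{b,u}=1^{H-2}$ and $g^{b,d}=0^{H-2}$, so evaluating $h^*_t$ at $(s_t,g)$ returns $1$ when $b_t=1$ (reading $g^{b,u,t}=1$) and $0$ when $b_t=0$ (reading $g^{b,d,t}=0$); in either case $f^t(a\,|\,s_t,g)=\delta(a=b_t)$. Whenever $g$ is reachable from a right-hand-side state $s_t$, the already fixed past actions agree with $b$ and the unique action at step $t$ consistent with reaching $g$ is $b_t$, so $f^t$ predicts it. Finally I would take $N$ equal to a polynomial in $H$ and $1/\delta$ and apply a union bound, at level of order $\delta/H$ each, over the $H-2$ time steps and over the event that $D$ contains at least one rewarding trajectory (so that $g$ is correctly identified). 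The step I expect to be the main obstacle is the population-risk computation: one must check, over \emph{every} coordinate and sign, that the risk is either $1/4$ — attained only by $h^*_t$ — or at least $1/2$, with no intermediate value, since it is exactly this gap that upgrades the $\epsilon$-approximate PAC conclusion into an exact identification of the learned classifier; verifying that the pure-noise left-hand-side samples contribute equally to all hypotheses is a smaller but necessary part of this.
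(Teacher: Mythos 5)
Your proposal is correct and follows essentially the same route as the paper's proof: bound the VC-dimension of the $\alpha=1$ class by $O(\log H)$ via Proposition \ref{prop:VC-union}, invoke Proposition \ref{prop:agnostic-PAC-learning}, show the best achievable risk is $\nicefrac{1}{4}$, argue that any sufficiently near-optimal hypothesis makes no errors on the right-hand side, and union-bound over time steps; your explicit enumeration of the risk of every single-coordinate threshold (gap $\nicefrac{1}{4}$ to the unique minimizer $h^*_t$) is just a concrete instantiation of the paper's more generic $\nicefrac{1}{2}\cdot p^{\text{input}}_{\min}\cdot p^{\text{output}}_{\min}$ margin argument. One small correction: on the left-hand side the input is not always all-zero (with probability $\nicefrac{1}{3}$ the final state is the rewarding state, whose $s_H^{b,u}$ coordinates are ones), but your conclusion that every hypothesis incurs error exactly $\nicefrac{1}{2}$ there still holds, since the action label is an independent fair coin given the input on that side.
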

    \begin{proof}
    
    The proof can be decomposed in 4 main affirmations:
    \begin{enumerate}
        \item The classes of learning hypotheses have a VC-dimension upper-bounded by a polynomial in $H$.
        \item For all the problems in the family, there exists a hypothesis with a low rate of errors.
        \item A sufficiently low error rate implies a perfect accuracy on the predicted action to reach $g$ when $s_t^c=1$.
        \item Set the last three points together to entail the Lemma.
    \end{enumerate}

    We pose $s_t$ the current state, $s_H$ the state at the end of the trajectory after observing $s_t$. Thus, the learned function takes as input the vector $\begin{bmatrix} s_t\\ s_H \end{bmatrix}$ (concatenation of the real parts of the states without the time steps).

    \paragraph{1.} For $\alpha=1$, the class of hypotheses is the union of $O(H)$ sets of linear functions over 1 real-valued variable composed with a threshold. Those linear functions with a threshold have thus VC-dimension $2$. Using Proposition \ref{prop:VC-union}, we infer that the VC-dimension of our hypothesis class is in $O(\log H)$.

    \paragraph{2.} Take the hypothesis that selects the variable $s_H^{b,u,t}$ and uses the linear identity function composed with a threshold $\delta(s_H^{b,u,t}>0)$.

    Both sides of the environment ($s^c=0$ or $1$) have $\nicefrac{1}{2}$ probability. Since all the samples on the right-hand side are correctly predicted by this hypothesis and the samples on the left-hand side have $\nicefrac{1}{2}$ probability to have either action, this hypothesis has an average error rate of $\nicefrac{1}{4}$ on the distribution induced by $\pi^U$.

    \paragraph{3.} For all the possible feature selections, there is a finite set of  possible inputs, for example, $\{0,1\}$ for $s_H^{b,u,i}$, or $\{0\}$ for $s_t^{b,d,i}$. Conditioned on $s_t^c=1$, these inputs all have a constant probability of being sampled by $\pi^U$ which is independent of $H$ and $b$ the family parameters. We note the minimum on these probabilities $p^{\text{input}}_{\min}$, which is thus independent of the family parameters $H$ and $b$.
    
    Similarly, we note $p^{\text{output}}_{\min}$ the minimal non-zero probability under $\pi^U$ of any of the two actions conditioned on an input coming from a trajectory on the right-hand side sub-problem after feature selection.

    Suppose that the error rate under the distribution induced by $\pi^U$ of hypothesis $h$ for the goal-conditioned prediction problem is $\nicefrac{1}{4}+\epsilon$. If $\epsilon<\nicefrac{1}{2}\cdot p^{\text{input}}_{\min}\cdot p^{\text{output}}_{\min}$ then the hypothesis does not make any errors on the right-hand side sub-problem. By contradiction, suppose that it does make a mistake on a pair of input-output, then take the value of the input of the variable selected by the hypothesis. There is at least $\nicefrac{1}{2}\cdot p^{\text{input}}_{\min}$ samples with that input and a $p^{\text{output}}_{\min}$ proportion of them with the same output. The hypothesis will thus make a mistake on a set of input coming from the right-hand side sub-problem of probability at least $\nicefrac{1}{2}\cdot p^{\text{input}}_{\min}\cdot p^{\text{output}}_{\min}$. To which we have to add a $\nicefrac{1}{2}$ error rate on the trajectories on the left-hand side of the dynamics which have probability $\nicefrac{1}{2}$.
    
    \paragraph{4.} To generalize the result to all the time steps at the same time, we use a union bound over the probability of failure of each time step.

    Using Proposition \ref{prop:agnostic-PAC-learning} and the affirmations just proven, with any constant $\epsilon< \nicefrac{p^{\text{input}}_{\min}\cdot p^{\text{output}}_{\min}}{2}$ and $\delta=\nicefrac{\delta}{H}$, there exists $m(H,\delta)\in O(\log \frac{H}{\delta})$ such that the algorithm will output a hypothesis which makes no mistake on the right-hand side sub-problem.
    \end{proof}

    \begin{definition}\label{def:armed-bandit}
        A \emph{deterministic armed-bandit problem} is composed of a finite set of arms $\A$ and a function $f:\A\rightarrow [0,1]$ which associates an arm to a deterministic reward. 
    \end{definition}
    \begin{proposition}\label{prop:bandit-hard}
       Let be a family of deterministic armed-bandit problems defined by $A\in\N$ and $1\leq i^*\leq A$. The deterministic armed-bandit problem defined by $A$ and $i^*$ has $A$ arms, the arm $i^*$ gives a reward of $1$, the others give a reward of $0$.

       For any randomized algorithm that tries to identify the arm with the maximal associated reward with $o(A)$ calls to the reward associating function, there exists an $A$ large enough and a $i^*$ such that the algorithm fails with probability at least $\nicefrac{1}{3}$ on the problem defined by $A$ and $i^*$.  
    \end{proposition}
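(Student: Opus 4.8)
\emph{Proof plan.} The plan is a standard adversary/averaging argument in the spirit of Yao's principle. Fix a randomized algorithm and write $q=q(A)$ for the number of calls it makes on instances with $A$ arms, so that $q(A)=o(A)$ by hypothesis. First I would reduce to deterministic algorithms: condition on the internal randomness $\rho$ of the algorithm to obtain a deterministic algorithm $\mathcal{A}_\rho$, and bound the success probability of $\mathcal{A}_\rho$ when the rewarding arm $i^*$ is drawn uniformly from $\{1,\ldots,A\}$.

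The key observation is that the reward function reveals nothing until the planted arm is queried: every query to an arm $j\neq i^*$ returns $0$. Hence, as long as $\mathcal{A}_\rho$ has not yet queried $i^*$, its sequence of queries is a fixed sequence determined by $\rho$ alone; in particular it queries some fixed set $Q_\rho$ of at most $q$ distinct arms (repeated queries only waste calls, so this is without loss of generality). Over the uniform choice of $i^*$, the event ``$i^*\in Q_\rho$'' has probability at most $q/A$. Conditioned on the complementary event, the transcript seen by $\mathcal{A}_\rho$ is the all-zero transcript, so its output is a fixed arm while $i^*$ is uniform on the $\geq A-q$ un-queried arms; thus it is correct with conditional probability at most $1/(A-q)$. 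Combining, the success probability of $\mathcal{A}_\rho$ against a uniform $i^*$ is at most $\frac{q}{A}+\bigl(1-\frac{q}{A}\bigr)\cdot\frac{1}{A-q}\leq \frac{q}{A}+\frac{1}{A-q}$.

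Averaging this bound over $\rho$ and exchanging the (finite) order of the two expectations, the average over a uniform $i^*$ of the algorithm's success probability is at most $\frac{q}{A}+\frac{1}{A-q}$. Since $q(A)=o(A)$, this quantity tends to $0$ as $A\to\infty$, so there is an $A$ large enough that it is strictly below $\nicefrac{2}{3}$. For that $A$, some $i^*$ achieves success probability at most the average, hence at most $\nicefrac{2}{3}$; that is, the algorithm fails on the instance defined by $A$ and $i^*$ with probability at least $\nicefrac{1}{3}$, which is the claim.

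I do not expect a genuine obstacle here; the only points needing care are bookkeeping with the quantifiers — reading $o(A)$ as a statement about the function $A\mapsto q(A)$, and making explicit the averaging/extremal step that passes from ``average success $<\nicefrac{2}{3}$'' to ``there exists $i^*$ with success $\leq\nicefrac{2}{3}$'' — together with the remark that allowing the algorithm to be adaptive gives it no advantage, since all transcripts prior to hitting $i^*$ coincide with the all-zero transcript.
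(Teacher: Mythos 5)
Your proposal is correct and follows essentially the same route as the paper's proof: reduce to deterministic algorithms by conditioning on the internal randomness, average over a uniform choice of $i^*$, bound the success probability by $\frac{q}{A}+\frac{1}{A-q}$, and let $A\to\infty$ using $q=o(A)$. The only differences are cosmetic — you argue directly rather than by contradiction, and you spell out why adaptivity gives no advantage (all pre-hit transcripts are all-zero), a point the paper compresses into its ``w.l.o.g.\ the $K$ first arms'' step.
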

    \begin{proof}
        By contradiction, let's suppose that there exists a randomized algorithm $\mathrm{Alg}$ that solves any problem in the family with probability at least $1-\delta$ for some $\delta\in(0,1)$ using $K\in o(A)$ samples. Let us note $r\sim R$ the random variable following some probability distribution $R$ upon which the algorithm depends, $p$ the problem in input that it solves, and $\mathrm{Alg}(p,r)$ the output of the algorithm with $p$ and $r$ in input. For any $A$, let $U(A)$ be the uniform probability distribution upon $\{1,\ldots,A\}$ and $p^A_{i^*}$ the bandit problem defined by $A$ and $i^*$. We have for any $A$
        \begin{equation}
            \E_{i^*\sim U(A)}\E_{r\sim R}[\delta(\mathrm{Alg}(p^A_{i^*},r)=i^*)]\geq 1-\delta.
        \end{equation}
        Which implies
        \begin{equation}
            \E_{r\sim R}\E_{i^*\sim U(A)}[\delta(\mathrm{Alg}(p^A_{i^*},r)=i^*)]\geq 1-\delta.
        \end{equation}
        There must exist some $r$ which performs at least as well as the mean. Thus there exists a deterministic algorithm $\mathrm{Alg}_d$ such that $\E_{i^*\sim U(A)}[\mathrm{Alg}_d(p^A_{i^*})]\geq 1-\delta$.

        We suppose w.l.o.g. that this algorithm uses its $K\in o(A)$ tries in the set of the $K$ first arms. Then with probability $1-\frac{K}{A}$ it only receives $0$ rewards and must make a guess for the best arm with no information for the $A-K$ left untested arms. Thus the deterministic algorithm has a probability of failure lower bounded by $1-\frac{K}{A}-\frac{1}{A-K}$.

        Since $K\in o(A)$, there exists $A$ large enough such that this quantity is larger than $\delta$. Thus we have a contradiction.
    \end{proof}

\section{Classical RL methods implemented with the interface}\label{sec:translation-alg}
    We showcase how several classical RL methods can be implemented in our framework. We implement an Actor-Critic method, defined in Algorithm \ref{alg:pi-learning-original}, with the interface in Algorithm \ref{alg:pi-learning-interface}.
    
    In Algorithm \ref{alg:tree-search}, we implement a local planning tree search procedure relying on the capacity of the interface to generate transitions at any given state.
    
    This local search procedure can be combined with a model-free RL method, such as the defined Actor-Critic method, to replace the rewards by value functions. This construction allows to cast various classical Model Predictive Control methods to respect our Assumption \ref{assum:RL-class-with-interface}. 
    
    The combination of tree search and model-free methods can also be extended to methods that leverage feedback from results of tree search procedures to train neural networks \cite{feinberg2018model}, then possibly improve the tree search procedure with these neural networks in a virtuous cycle \cite{anthony2017thinking,silver2017mastering}.

    \begin{algorithm}[t]
        \caption{Deep Policy Gradient with Value function learning (Actor-Critic).}
        \label{alg:pi-learning-original}
        \textbf{Parameters:} $H$: the horizon of the MDP, $\Sc$: state space, $\A$: action space, $K$: number of sampled trajectories, $\eta>0$: scalar factor for gradient descent, $I$: number of iterations by initialization of the value function (should be divisible by $H$)
        \begin{algorithmic}
            \Require $P$: the operator corresponding to the MDP to solve
            \State Initialize a neural network $\pi_{\theta_1}:\Sc\rightarrow\Delta(\A)$
            \State Initialize a neural network $V_{\theta_2}:\Sc\rightarrow \R$
            \For{$k\leftarrow 1\ldots K$}
                \If{$k\cdot H\mod{I}=0$}
                    \State $\Bar{V}\leftarrow V_{\theta_2}$
                    \State Initialize a neural network $V_{\theta_2}:\Sc\rightarrow \R$
                \EndIf
                \State $s\leftarrow P(s|\,\bot)$
                \For{$t\leftarrow 0,\ldots, H-1$}
                    \State $a\sim \pi_{\theta_1}(a|\,s)$
                    \State $r,s' \leftarrow P(r,\,s'|\,s,\,a)$
                    \State $\theta_1\leftarrow \theta_1 + \eta (r+\Bar{V}(s')) \nabla_{\theta_1}\log \pi_{\theta_1}(a|\,s) $
                    \State $\theta_2\leftarrow \theta_2 - \eta \nabla_{\theta_2} \left[V_{\theta_2}(s)-r-\Bar{V}(s')\right]^2 $
                    \State $s\leftarrow s'$
                \EndFor
            \EndFor
            \State \Return $\pi_{\theta_1}(a|\,s)$
        \end{algorithmic}
    \end{algorithm}
    
    \begin{algorithm}[t]
        \caption{Deep Policy Gradient with Value function learning (Actor-Critic) implemented with the interface defined in Algorithm \ref{alg:interface}. The function $\mathrm{chunk}(D,\,I)$ partitions the data sequence $D$ into contiguous subsequences of length $I$ until there is less than $I$ elements left which are put in the last subsequence. We simplify the learning of $V$, the value function, following the discussion in Section \ref{sec:sym-nn}.}
        \label{alg:pi-learning-interface}
        \textbf{Parameters:} $H$: horizon of the MDP, $\Sc$: state space, $\A$: action space, $K$: number of sampled trajectories, $\eta>0$: scalar factor for gradient descent, $I$: number of iterations by initialization of the value function
        \begin{algorithmic}
            \Require $\mathrm{Env}$: the MDP interface with its methods defined in Algorithm \ref{alg:interface}
            \State Initialize a neural network $\pi_{\theta_1}:\Sc\rightarrow\Delta(\A)$
            \Function{$\mathrm{learningV}$}{$s,\,D$}
                \State Initialize a neural network $V_{\theta_2}:\Sc\rightarrow \R$
                \State $D_1,\ldots,D_{n-1},D_n\leftarrow\mathrm{chunk}(D,\,I)$
                \For{$(s,(r,V_{s'}))\leftarrow D_{n-1}$}
                    \State $\theta_2\leftarrow \theta_2 - \eta \nabla_{\theta_2} \left[V_{\theta_2}(s)-r-V_{s'}\right]^2 $
                \EndFor
                \State \Return $V_{\theta_2}(s')$
            \EndFunction
            \State $\F\leftarrow [\mathrm{learningV}]$
            \State $g\leftarrow g(a,r,f_{s'})=(r,f_{s'})$
            \For{$k\leftarrow 1\ldots K$}
                \State $s,\Bar{s},V_s\leftarrow\mathrm{Env}(\bot)$
                \For{$t\leftarrow 0\ldots H-1$}
                    \State $a\sim \pi_{\theta_1}(a|\,s)$ 
                    \State $s,a,r,\Bar{s},V_{s'} \leftarrow\mathrm{Env}(\Bar{s},\, a,\,g,\, \F)$
                     \State $\theta_1\leftarrow \theta_1 + \eta (r+V_{s'})\nabla_{\theta_1}\log \pi_{\theta_1}(a|\,s) $
                \EndFor
            \EndFor
            \State \Return $\pi_{\theta_1}(a|\,s)$
        \end{algorithmic}
    \end{algorithm}

    \begin{algorithm}[t]
        \caption{Tree-search implementation using the interface defined in Algorithm \ref{alg:interface}. This algorithm can be combined with the other methods presented, to use value function estimates instead of rewards for example.}
        \label{alg:tree-search}
        \textbf{Parameters:} $H$: horizon of the MDP, $\A$: action space, $H_S>0$: search horizon of the tree
        \begin{algorithmic}
            \Require $\mathrm{Env}$: the MDP interface with its methods defined in Algorithm \ref{alg:interface}
            \Function{Tree\_search}{$\Bar{s}$, $h$}
                \If{$h=0$}
                    \Return \_\,, $0$
                \EndIf
                \For{$a\in \A$}
                    \State $s,a,r,\Bar{s'},\_\leftarrow\mathrm{Env}(\Bar{s},\,a,\,\_,\,[])$
                    \If{$r=1$}
                        \State \Return $a$, $1$
                    \Else
                        \State $a',R\leftarrow$ Tree\_search($\Bar{s'}$, $h-1$)
                        \If{$R=1$}
                            \State \Return $a$, $1$
                        \EndIf
                    \EndIf
                \EndFor
                \State \Return \_\,, $0$
            \EndFunction
            \Function{Tree\_search}{$s_t$}
                \State $a,r \leftarrow$Tree\_search($\mathrm{Env\_encode}(s_t)$, $\min\{H_S,H-t\}$)
                \State \Return $a$
            \EndFunction
            \State $\pi(a|\,s_t)\leftarrow$ Tree\_search($s_t$)
            \State \Return $\pi(a|\,s_t)$
        \end{algorithmic}
    \end{algorithm}
        
\section{Symmetries in neural networks learning }\label{sec:sym-nn}
    We provide here a formal argument that classical neural network training procedures satisfy the condition put on functions contained in $\F$ described in Assumption \ref{assum:RL-class-with-interface}.

    We prove that the distribution of outputted functions by a gradient descent procedure respects such a symmetry.

    In Definition \ref{def:sym-cond}, we construct a symmetry condition that will help us to state and to prove our result. In Theorem \ref{thm:learn_sym}, we state the main result of this section on the symmetry of the learning procedure. Corollary \ref{coro:learn-sym} restate the Theorem, but in a form which can directly be compared with the condition on $\F$ stated in Assumption \ref{assum:RL-class-with-interface}.

    For the clarity of the exposition, we will abusively refer to the probability of a function instead of the probability of events in the related implicit $\sigma$-algebra. 

    \begin{definition}{Symmetry condition.}\label{def:sym-cond}
    
        Let $L$ be a function from a sequence of data points in $(\R^n\times\R)^N$ for some natural number $N$, to a set of distributions over functions $\Delta(\{\R^n\rightarrow\R\})$.
    
        For any sequence $d=((x_0,y_0),\ldots)$ of data points and for any permutation of the coordinates $p$ over $\R^n$ (Definition \ref{def:perm}), we define $d_p=((p(x_0),y_0),(p(x_1),y_1),\ldots)$.

        The function $L$ satisfies the symmetry condition if, for any permutation of the coordinates $p$, we have $L(d)=L(d_p)\circ p^{-1}$. In other words, for any function $h:\R^n\rightarrow\R$, the output with $d$ must give the same probability on $h$ as the output with $d_p$ on $h\circ p$.
    \end{definition}

    \begin{algorithm}[ht]
        \caption{Neural network training for a neural network with parameters $\theta=(W\in\R^{m\times n},\theta'\in\R^k)$ for some $n,m,k\in\N$, which is interpreted in $\mathrm{nn}_\theta(x)=q_{\theta'}(Wx)$ for $q$ that outputs a real and is smooth w.r.t. $\theta'$ and its input.}
        \label{alg:nn_with_linear_training}
        \textbf{Parameters:} $\eta>0$: a scalar factor for gradient descent, loss$:\R\times\R\rightarrow\R$: a smooth loss function w.r.t its inputs
        \begin{algorithmic}
            \Require $D$: a sequence of couples $(x\in\R^n,y)$
            \State Initialize $\theta'$ and matrix $W$. The elements of the matrix $W$ are initialized i.i.d.
            \For{$(x,y)\in D$}
                \State $\theta \leftarrow \theta - \eta \nabla_\theta \text{loss}(q_{\theta'}(Wx), y)$
            \EndFor\\
            \Return $\mathrm{nn}_{\theta=(W,\theta')}$
        \end{algorithmic}
    \end{algorithm}

    \begin{theorem}\label{thm:learn_sym}
        The output of the randomized Algorithm \ref{alg:nn_with_linear_training}, which produces a learned function given a dataset, follows a distribution defined by the dataset. The function between the dataset and the distribution of learned functions respects the symmetry condition.
    \end{theorem}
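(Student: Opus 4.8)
The plan is to track how a permutation of the input coordinates propagates through the entire training dynamics of Algorithm \ref{alg:nn_with_linear_training}, and to show that the only effect is a corresponding relabelling of the columns of $W$, which leaves the distribution of the represented function invariant once we compose with the permutation. The key observation is that the network has the form $\mathrm{nn}_\theta(x) = q_{\theta'}(Wx)$, so the input coordinates enter only through the linear map $W$; a coordinate permutation $p$ with associated bijection $f$ corresponds to a permutation matrix $\Pi$ acting on the right, $Wx = (W\Pi)(\Pi^{-1}x) = (W\Pi)(p^{-1}(x))$. Thus if we run the algorithm on the permuted dataset $d_p$, we should be able to exhibit a coupling of the two runs under which $W^{(t)}_{d_p} = W^{(t)}_{d}\,\Pi$ and $\theta'^{(t)}_{d_p} = \theta'^{(t)}_{d}$ at every step $t$.

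First I would set up the coupling at initialization: since the entries of $W$ are i.i.d., the law of $W$ is invariant under any column permutation, so $W^{(0)}$ and $W^{(0)}\Pi$ have the same distribution, and we can couple the two runs so that the initialization of the permuted run is exactly $\Pi^{-1}$ applied columnwise to (equivalently, $W^{(0)}\Pi$ with the same draw for) the unpermuted run, while $\theta'^{(0)}$ is taken identical. Second, I would verify the inductive step for one gradient update. Writing the loss term for a sample $(x,y)$ in the original run as $\mathrm{loss}(q_{\theta'}(Wx),y)$ and for the corresponding sample $(p(x),y)$ in the permuted run as $\mathrm{loss}(q_{\theta'}(W' p(x)),y)$ with $W' = W\Pi$, we have $W'p(x) = W\Pi\,p(x) = Wx$, so the two forward passes produce the same scalar $q_{\theta'}(Wx)$ and hence the same loss value. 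The gradient with respect to $\theta'$ is therefore identical in both runs, preserving $\theta'^{(t)}_{d_p} = \theta'^{(t)}_{d}$. The gradient with respect to $W$ satisfies, by the chain rule, $\nabla_{W'}\mathrm{loss} = (\nabla_W\mathrm{loss})\,\Pi$ (each column of the $W$-gradient is $\partial_u q \cdot (\text{upstream})\cdot x$, and replacing $x$ by $p(x)$ permutes these columns by $f$), so the update $W' \leftarrow W' - \eta\,(\nabla_W\mathrm{loss})\Pi$ keeps $W'^{(t+1)} = W^{(t+1)}\Pi$. Iterating over the dataset yields $W_{d_p}^{(\mathrm{final})} = W_d^{(\mathrm{final})}\Pi$ and $\theta'_{d_p} = \theta'_d$ under the coupling, hence $\mathrm{nn}_{(W_d\Pi,\theta'_d)}(x) = q_{\theta'_d}(W_d\Pi x) = q_{\theta'_d}(W_d\,p^{-1}(x)) = \mathrm{nn}_{(W_d,\theta'_d)}(p^{-1}(x))$, i.e. the permuted run outputs $h\circ p^{-1}$ whenever the unpermuted run outputs $h$. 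Passing from the coupling to distributions gives exactly $L(d_p) = L(d)\circ p$, equivalently $L(d) = L(d_p)\circ p^{-1}$, which is the symmetry condition of Definition \ref{def:sym-cond}; finally I would invoke Corollary \ref{coro:learn-sym} (or restate) to match the form required by Assumption \ref{assum:RL-class-with-interface}.

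The main obstacle is purely bookkeeping rather than conceptual: one must be careful about the direction of the permutation (whether $p$ acts as $\Pi$ or $\Pi^{-1} = \Pi^\top$, since $p(x)_{f(i)} = x_i$ means $p$ sends coordinate $i$ to position $f(i)$), and about the fact that the claim is a statement about the induced distribution on functions, not on raw parameters — the translation from "coupled parameter trajectories" to "equal probabilities of $h$ versus $h\circ p$" requires noting that the map $(W,\theta')\mapsto \mathrm{nn}_{(W,\theta')}$ is measurable and that column-permutation of $W$ is a measure-preserving bijection of the initialization law. A secondary subtlety is that $q_{\theta'}$ and the loss need only be smooth (so gradients exist), and the argument only uses that the input enters $q$ through $Wx$; I would state explicitly that no further structural assumption on $q$ is needed beyond what is in the algorithm's hypotheses.
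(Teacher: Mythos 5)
Your proposal is correct and follows essentially the same route as the paper's proof: an induction over the gradient-descent steps showing that a coordinate permutation of the data only permutes the columns of the first-layer matrix $W$ (whose i.i.d.\ initialization is invariant under such permutations), while everything downstream of $Wx$, including the $\theta'$ updates, is unchanged. Your explicit same-seed coupling is just a sharper phrasing of the paper's ``as probable as $WP^{-1}$'' induction, and the $p$ versus $p^{-1}$ bookkeeping you flag is a benign convention issue that the paper's own argument glosses over in the same way.
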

    \begin{proof}
        We prove by induction that for any permutation of the coordinates $p$ both the process that constructs the linear operator represented by $W$ and the processes that produce all the other functions of the input in the neural networks (output, gradients) satisfy the symmetry condition.

        Let $P$ be the permutation matrix corresponding to $p$.

        At initialization, the symmetry condition is satisfied everywhere. The matrix $W$ is as probable as $WP^{-1}$ and is independent of the dataset. All the other functions computed in the neural network depend on the input only through the linear operator that satisfies the condition, such that they also satisfy the symmetry condition.

        Now we look at the updates. Let $g$ be the gradient at the output of the linear operator for some input $x$. Then the update of $W$ is
        \begin{equation}
            W\leftarrow W -\eta gx^T.
        \end{equation}
        On the transformed dataset, with $WP^{-1}$ the matrix of the linear operator, $Px$ the new input, and $h$ the new gradient, the update becomes
        \begin{equation}
            WP^{-1}\leftarrow WP^{-1} -\eta h(Px)^T,
        \end{equation}
        which reduces to $(W-\eta hx^T)P^{-1}$. By the inductive hypothesis, we know that the processes that produce $h$ respect the symmetry condition, thus the gradient $h$ is sampled from the same distribution as $g$. Also, by the inductive hypothesis, the previous $WP^{-1}$ is as probable as the previous $W$ under their respective distributions. Consequently, the updated $W$ is as probable as the updated $WP^{-1}$.

        Similarly to the initialization, the processes that produce the rest of the network functions of the input respect the symmetry condition because they depend on the input only through the output of the linear operator since the start of the algorithm.
        
    \end{proof}

    \begin{corollary}\label{coro:learn-sym}
        The expectation of the distribution of the output of neural networks applied on some input $z\in\R^n$ produced by randomised Algorithm \ref{alg:nn_with_linear_training} for some sequence of points $d$ can be written as $f(z,((x_0,y_0),\ldots))$ for some function $f$ that is invariant to a permutation of the coordinates applied to $z$ and $x_0,\ldots$
    \end{corollary}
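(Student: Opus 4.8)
The plan is to produce the function $f$ explicitly and then read off its invariance directly from Theorem \ref{thm:learn_sym}. Let $L$ be the map sending a finite input sequence $d=((x_0,y_0),\ldots)$ to the distribution over functions $\R^n\rightarrow\R$ returned by the randomised Algorithm \ref{alg:nn_with_linear_training} with $D=d$, and set
\[
  f(z,d)\;=\;\E_{h\sim L(d)}\big[\,h(z)\,\big].
\]
This is by construction a function of the query point $z$ and of the sequence $d=((x_0,y_0),\ldots)$, which already establishes the ``can be written as'' part of the statement; it only remains to prove the invariance $f(z,d)=f(p(z),d_p)$ for every permutation of the coordinates $p$, where $d_p=((p(x_0),y_0),(p(x_1),y_1),\ldots)$ as in Definition \ref{def:sym-cond}. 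That the expectation exists is immediate: the algorithm performs finitely many gradient steps through maps that are smooth in the parameters and the data, so $h(z)$ is an integrable functional of the i.i.d.\ initialisation.

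For the invariance I would invoke Theorem \ref{thm:learn_sym}: the map $L$ satisfies the symmetry condition of Definition \ref{def:sym-cond}, i.e.\ training on the coordinate-permuted dataset yields, in law, the same learned function with its input coordinates permuted the same way. Concretely, $L(d_p)$ is the pushforward of $L(d)$ under the precomposition map $h\mapsto h\circ p^{-1}$. Therefore
\[
  f(p(z),d_p)\;=\;\E_{g\sim L(d)}\big[\,(g\circ p^{-1})(p(z))\,\big]\;=\;\E_{g\sim L(d)}\big[\,g(z)\,\big]\;=\;f(z,d),
\]
the middle equality using that a permutation of the coordinates is a bijection of $\R^n$ (Definition \ref{def:perm}), so $p^{-1}(p(z))=z$. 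This is exactly the asserted invariance. As a sanity check, specialising $h$ to an $\mathrm{Env}$-side learned $Q$- or value-function shows that the functions placed in $\F$ in the translations of Section \ref{sec:translation-alg} meet the symmetry clause of Assumption \ref{assum:RL-class-with-interface}, since $f(s,((s_0,x_0),\ldots))=f(p(s),((p(s_0),x_0),\ldots))$.

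The corollary is essentially a restatement of Theorem \ref{thm:learn_sym}, so the only place that needs care is the bookkeeping of the permutation direction: one must track whether $p$ or $p^{-1}$ acts on the query $z$ versus on the data points $x_i$, so that the two occurrences cancel and one lands on the \emph{same} permutation acting on $z$ and on all $x_i$, as claimed. This is pinned down by the coupling used in the proof of Theorem \ref{thm:learn_sym} --- where the first-layer weights trained on $d_p$ are coupled to $WP^{-1}$ with $W$ the weights trained on $d$, giving $h\mapsto h\circ p^{-1}$ precisely. Beyond lining this up, the argument is routine.
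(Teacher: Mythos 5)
Your proposal is correct and follows essentially the same route as the paper's own proof: define $f(z,d)=\E_{h\sim L(d)}[h(z)]$ by construction and then transfer the symmetry of the distribution of learned functions (Theorem \ref{thm:learn_sym}) to the expectation, yielding $\E_{h\sim L(d)}[h(z)]=\E_{h\sim L(d_p)}[h(p(z))]$. Your extra care about the permutation direction, anchored in the $W\mapsto WP^{-1}$ coupling, is consistent with what the theorem's proof actually establishes, so nothing is missing.
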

    \begin{proof}
        By construction, $f$ exists and is simply the function representing the process producing the distribution of learned functions composed with the expectation of their evaluations. Since the distribution of learned functions respects the symmetry condition by Theorem \ref{thm:learn_sym}, we know that for any permutation of the coordinates $p$, we have $\mathbb{E}_{\textbf{nn}\sim\textbf{Alg}(d)}[\textbf{nn}(z)]=\mathbb{E}_{\textbf{nn}\sim\textbf{Alg}(d_p)}[\textbf{nn}(p(z))]$ where \textbf{nn} is the neural network trained by the algorithm \textbf{Alg}.
    \end{proof}

\section{Numerical experiments}\label{sec:num-exp}

    We describe the implementation of the different methods we test: a goal-conditioned method in Algorithm \ref{alg:neural-gc}, fitted Q-iteration \cite{riedmiller2005neural} in Algorithm \ref{alg:fitted-Q-iter-implem} and Proximal Policy Optimization (PPO) \cite{schulman2017proximal}.

    We test all these algorithms with MLP neural networks. Any state in input to a neural network $s=(t,x)\in\Sc=\{(t,x)\in\{0,\ldots, H\}\times\R^n\}$ will be represented with a one-hot encoding of the time step $t$ concatenated to the real part $x$.

    Some notations, the function $\mathrm{CE}:\Delta(X)\times X\rightarrow \R$ (Cross-Entropy) computes $\mathrm{CE}(q,x)=-\log q(x)$ for some set $X$. The function $\mathrm{clip}(x\in\R,a\in\R,b\in\R)$ computes $\min\{\max\{x,a\},b\}$. The method $\mathrm{copy}$ copies the object, it allows to fix its state.

    The optimizations are performed with mini-batches estimation of the gradient, and AdamW, a regularized version of Adam \cite{loshchilov2017decoupled}. Hyper-parameter optimization was performed with Bayesian Optimization to maximize success probability on problems with horizon $H=45$ for the goal-conditioned algorithm, $H=12$ for the fitted Q-iteration algorithm, and $H=17$ for the PPO algorithm.

    The goal-conditioned algorithm was run with $2$ hidden layers of $256$ units each, parameter $I=1000$, $30000$ steps of AdamW with learning rate $2\cdot10^{-2}$, weight decay $10^{-3}$, and $100$ batches.

    The fitted Q-iteration algorithm was run with $2$ hidden layers of $256$ units each, for $K=50$ iterations, with $I=1000$ sampled trajectories by iteration, $\epsilon=0.3$, $1000$ steps of AdamW by iteration, learning rate $3\cdot10^{-4}$, weigh decay $5\cdot10^{-5}$, each $1000$ trajectories samples were divided into $20$ batches.

    The PPO algorithm was run with $2$ hidden layers of $512$ units each, for $K=50$ iterations, with $I=1000$ trajectories sampled by iterations, clipping parameter $\epsilon=0.2$, $\beta=10^{-3}$, $500$ steps of AdamW by iteration, learning rate of $2\cdot10^{-3}$, weight decay $10^{-8}$ and $10$ batches of the sampled trajectories.

    It takes less than 3 days with a single GPU to reproduce the results in this article.

    \begin{algorithm}[th]
        \caption{Neural goal-conditioned algorithm.}\label{alg:neural-gc}
        \textbf{Parameters:} $\Sc$: state space, $\A$: action space, $I$: number of samples
        \begin{algorithmic}[1]
            \Require $P$: the operator corresponding to the MDP to solve
            \State $D\leftarrow\{\}$
            \For{$i\leftarrow 1,\ldots I$}
                \State $D\leftarrow D\cup\{(s_0,a_0,r_0,s_1,a_1,\ldots,s_H)\sim P^{\pi^U}\}$
            \EndFor
            \State $D\leftarrow \{(s_t,a_t,s_H)|\,(\ldots, s_t, a_t,\ldots,s_H)\in D\}$.
            \State Initialize neural network $f_\theta:\Sc\rightarrow\Delta(\A)$
            \State $g\leftarrow \argmax_{s_H} r\quad \text{s.t.}\quad (\ldots,r,s_H)\in D$.
            \State $\theta\leftarrow \argmin\limits_{\theta}\frac{1}{\abs{D}}\sum_{(s,a,s_H)\in D}\mathrm{CE}(f_\theta([s,s_H]),a)$
            \State \Return $\pi(a|s)=f_\theta(a|s,g)$
        \end{algorithmic}
    \end{algorithm}

    \begin{algorithm}[th]
        \caption{Fitted Q-iteration}
        \label{alg:fitted-Q-iter-implem}
        \textbf{Parameters:} $H$: horizon of the MDP, $\Sc$: state-space, $\A$: action space, $K$: number of iterations, $I$: number of trajectory samples by iteration, $\epsilon$: exploration parameter
        \begin{algorithmic}
            \Require $P$: the operator corresponding to the MDP to solve
            \State $\pi_{\epsilon}^{Q}(a|\,s)\leftarrow(1-\epsilon)\delta(a=\argmax_{a\in A}Q(s,a)) + \nicefrac{\epsilon}{2}\delta(a=0)+\nicefrac{\epsilon}{2} \delta(a=1)$
            \State Initialize a neural network $Q_\theta:\Sc\rightarrow \R^{\abs{\A}}$
            \State $D\leftarrow \{\}$
            \For{$k\leftarrow 1,\ldots, K$}
                \State $D'\leftarrow\{\}$
                \For{$i\leftarrow 1,\ldots, I$}
                    \State $D'\leftarrow D'\cup\{(s_0,a_0,r_0,s_1,\ldots,s_H)\sim P^{\pi_\epsilon^{Q_\theta}}\}$
                \EndFor
                \State $D\leftarrow D\cup D\{(s,a,r,s')|\,(\ldots, s, a,r,s',\ldots)\in D\}$ 
                \State $\Bar{Q}\leftarrow \mathrm{copy}(Q_{\theta})$
                \State $\theta\leftarrow\argmin\limits_{\theta} \frac{1}{\abs{D}}\sum\limits_{(s,a,r,s')\in D}\left[Q_\theta(s,a)-r-\max_{a\in\A}\Bar{Q}(s',a)\right]^2$
            \EndFor
            \State \Return $\pi^{Q_\theta}_0(a|\,s)$
        \end{algorithmic}
    \end{algorithm}

    \begin{algorithm}[th]
        \caption{Proximal Policy Optimization}
        \label{alg:ppo-implem}
        \textbf{Parameters:} $H$: horizon of the MDP, $\Sc$: state-space, $\A$: action space, $K$: number of iterations, $I$: number of trajectory samples by iteration, $\epsilon$: clipping parameter, $\beta$: weight of the entropy regularization
        \begin{algorithmic}
            \Require $P$: the operator corresponding to the MDP to solve
            \Function{$\mathrm{PPO\_loss}$}{$s,\,a,\,R|\,\pi_{\theta_1},\,\Bar{\pi},\,\Bar{V}$}
                \State $A\leftarrow R-\Bar{V}(s)$
                \State $r_{\pi}\leftarrow \frac{\pi_{\theta_1}(a|s)}{\Bar{\pi}(a|s)}$
                \State \Return $\min\{r_\pi A, \mathrm{clip}(r_{\pi},1-\epsilon,1+\epsilon)A\}+ \beta \mathrm{entropy}(\pi_{\theta_1}(.|s))$
            \EndFunction
            \State Initialize a neural network $\pi_{\theta_1}:\Sc\rightarrow \Delta(\A)$
            \State Initialize a neural network $V_{\theta_2}:\Sc\rightarrow \R$
            \For{$k\leftarrow 1,\ldots, K$}
                \State $D\leftarrow\{\}$
                \For{$i\leftarrow 1,\ldots, I$}
                    \State $D\leftarrow D\cup\{(s_0,a_0,r_0,s_1,\ldots,s_H)\sim P^{\pi_\epsilon^{Q_\theta}}\}$
                \EndFor
                \State $D\leftarrow \{(s_t,a_t,\sum_{j=t}^{H-1}r_j)|\,(\ldots, s_t, a_t,r_t,\ldots)\in D\}$ 
                \State $\Bar{\pi}\leftarrow \mathrm{copy}(\pi_{\theta_1})$
                \State $\Bar{V}\leftarrow \mathrm{copy}(V_{\theta_2})$
                        \State $\theta_1\leftarrow \argmax\limits_{\theta_1} \frac{1}{\abs{D}}\sum\limits_{(s,a,R)}\mathrm{PPO\_loss}(s,a,R|\,\pi_{\theta_1},\Bar{\pi},\Bar{V})$
                        \State $\theta_2\leftarrow\argmin\limits_{\theta_2}\frac{1}{\abs{D}}\sum\limits_{(s,a,R)\in D}(V_{\theta_2}(s)-R)^2$
            \EndFor
            \State \Return $\pi_{\theta_1}(a|\,s)$
        \end{algorithmic}
    \end{algorithm}

\end{document}